\acrodef{GNC}[GNC]{Graduated-Nonconvexity}
\acrodef{WLS}[WLS]{Weighted Least Squares}
\acrodef{TLS}[TLS]{Truncated Least Squares}
\acrodef{SVD}[SVD]{Singular Value Decomposition}
\acrodef{VO}[VO]{Visual Odometry}
\acrodef{VIO}[VIO]{Visual Inertial Odometry}
\acrodef{SDF}[SDF]{Signed Distance Field}
\acrodef{SDFs}[SDFs]{Signed Distance Fields}
\acrodef{ESDF}[ESDF]{Euclidean Signed Distance Field}
\acrodef{CESDF}[C-ESDF]{Certified \ac{ESDF}}
\acrodef{TSDF}[TSDF]{Truncated Signed Distance Field}
\acrodef{RGBD}[RGBD]{RGB-Depth}
\acrodef{IMU}[IMU]{Inertial Measurement Unit}
\acrodef{EKF}[EKF]{Extended Kalman Filter}
\acrodef{VO}[VO]{Visual Odometry}
\acrodef{CVO}[C-VO]{Certified Visual Odometry}
\acrodef{ISS}[ISS]{Input-to-State}
\acrodef{SLAM}[SLAM]{Simultaneous Localization and Mapping}
\acrodef{SFC}[SFC]{Safe Flight Corridors}
\acrodef{FOV}[FoV]{Field of View}
\newcommand{\naturals}{\mathbb{N}}
\newcommand{\reals}{\mathbb{R}}
\newcommand{\R}{\reals}
\newcommand{\Rnonneg}{\reals_{\geq 0}}
\newcommand{\Rplus}{\reals_{>0}}
\renewcommand{\S}{\mathbb{S}}
\newcommand{\SO}{\mathbb{SO}}
\newcommand{\SE}{\mathbb{SE}}
\newcommand{\Acal}{\mathcal{A}}
\newcommand{\Bcal}{\mathcal{B}}
\newcommand{\Fcal}{\mathcal{F}}
\newcommand{\Ical}{\mathcal{I}}
\newcommand{\Ocal}{\mathcal{O}}
\newcommand{\Pcal}{\mathcal{P}}
\newcommand{\eqn}[1]{\begin{align} #1 \end{align}}
\newcommand{\eqnN}[1]{\begin{align*} #1 \end{align*}}
\newcommand{\bmat}[1]{\begin{bmatrix}#1\end{bmatrix}}
\newcommand{\norm}[1]{\left\Vert #1 \right \Vert}
\newcommand{\fronorm}[1]{\left\Vert #1 \right \Vert_F}
\newcommand{\abs}[1]{\left | #1 \right |}
\newcommand{\tr}[1]{\operatorname{tr}\left(#1\right)}
\newcommand{\argmin}[1]{\underset{\substack{#1}}{\text{argmin}}}
\newcommand{\inframe}[1]{|^{#1}}
\newcommand{\rotation}[2]{R_{#1}^{#2}}
\newcommand{\estrotation}[2]{\hat{R}_{#1}^{#2}}
\newcommand{\translation}[2]{t_{#1}^{#2}}
\newcommand{\esttranslation}[2]{\hat{t}_{#1}^{#2}}
\newcommand{\rototranslation}[2]{(\rotation{#1}{#2}, \translation{#1}{#2})}
\newcommand{\estrototranslation}[2]{(\estrotation{#1}{#2}, \esttranslation{#1}{#2})}
\newcommand{\Brotation}[2]{R_{B_{#1}}^{B_{#2}}}
\newcommand{\estBrotation}[2]{\hat{R}_{B_{#1}}^{B_{#2}}}
\newcommand{\Btranslation}[2]{t_{B_{#1}}^{B_{#2}}}
\newcommand{\estBtranslation}[2]{\hat{t}_{B_{#1}}^{B_{#2}}}
\newcommand{\estBrototranslation}[2]{(\estBrotation{#1}{#2}, \estBtranslation{#1}{#2})}
\theoremstyle{plain}
\newtheorem{theorem}{Theorem}
\newtheorem{lemma}{Lemma}
\newtheorem{problem}{Problem}
\newtheorem{definition}{Definition}
\theoremstyle{definition}
\newtheorem{assumption}{Assumption}
\newtheorem{remark}{Remark}
\theoremstyle{remark}
\let\NAT@parse\undefined
\begin{document}

% paper title
\title{Online and Certifiably Correct \\ Visual Odometry and Mapping}

% You will get a Paper-ID when submitting a pdf file to the conference system
\ifx\anonymized\undefined 
\author{Devansh R. Agrawal,$^{1*}$ Rajiv Govindjee,$^{1}$ Jiangbo Yu,$^{2}$ Anurekha Ravikumar,$^{3}$ and Dimitra Panagou$^{1,3}$ 
\thanks{The authors would like to acknowledge the support of the National Science Foundation through the Award No. 1942907. This work was partially supported by Amazon.}% <-this % stops a space
\thanks{$^{1}$Department of Aerospace Engineering, University of Michigan, 1320 Beal Ave, Ann Arbor, MI 48109, USA.}%
\thanks{$^{2}$Department of Mechanical Engineering, University of Michigan, 2350 Hayward St, Ann Arbor, MI 48109, USA.}%
\thanks{$^{3}$Department of Robotics, University of Michigan, 2505 Hayward St, Ann Arbor, MI 48109, USA.}%
\thanks{$^{*}$Correspondence: {\tt\small devansh@umich.edu} }
}
\else
\author{Author Names Omitted for Anonymous Review. Paper-ID 256}
\thanks{$^{*}$[redacted]}
}
\fi

\maketitle

\begin{abstract}
This paper proposes two new algorithms for certified perception in safety-critical robotic applications. The first is a Certified Visual Odometry algorithm, which uses a RGBD camera with bounded sensor noise to construct a visual odometry estimate with provable error bounds. The second is a Certified Mapping algorithm which, using the same RGBD images, constructs a Signed Distance Field of the obstacle environment, always safely underestimating the distance to the nearest obstacle. This is required to avoid errors due to VO drift. The algorithms are demonstrated in hardware experiments, where we demonstrate both running online at 30FPS. The methods are also compared to state-of-the-art techniques for odometry and mapping.
\end{abstract}

\IEEEpeerreviewmaketitle

\section{Introduction}
\label{sec:introduction}

Accurate and reliable perception, state estimation, and mapping are critical components of a robotic system. When operating safety-critical robotic systems, the planners and controllers rely on the outputs of the perception module to determine whether a planned trajectory or control action is safe. Over the last decade, many methods have been developed to certify that a controller satisfies safety specifications prescribed a priori, e.g.~\cite{ames2016control, garg2023advances}. If the specification is to avoid obstacles, and these obstacles can only be sensed online, it can be more natural to address safety constraints using the planning module~\cite{lopez2017aggressive, tordesillas2019faster, agrawal2023gatekeeper}. However these approaches tend to assume perfect information from the perception system, an unrealistic assumption that can lead to safety violations. 

A perception module returns a pose estimate and a map representing the obstacles in an environment. The representations can take many forms, including an \ac{ESDF}~\cite{oleynikova2017voxblox, nvblox}, occupancy log-odds~\cite{hornung2013octomap} or NERFs~\cite{rosinol2023nerf}. These methods construct ``best-estimate'' maps, and do not quantify the error. Without error bounds, the planners/controllers are unable to guarantee safety.

In state estimation, there are some methods to determine error bounds. A Kalman Filter for example quantifies the state estimation uncertainty in terms of a covariance ellipsoid. There is also growing interest in certified perception techniques, i.e., algorithms that provably recover the globally optimal solution or in some cases those that can provide error bounds with respect to the optimal solution. For example, the algorithm in~\cite{rosen2019se} provably returns the global optimum to a pose-graph optimization problem, by reformulating it into a convex optimization problem. The algorithm in~\cite{marchi2022lidar} can determine the location of a robot in a convex 2D environment with error bounds. On the other hand, the accuracy of (uncertified) perception algorithms have been improving in recent years, and many experimental demonstrations show good performance in GPS-denied environments~\cite{scaramuzza2011visual, yu2021vins, chen2023direct, tian2022kimera}. To the best of our knowledge, no formal error analysis is available for these methods. 

% As a result, a key challenge remains in integrating the perception systems into safety-critical path planners and controllers: what is the level of uncertainty produced by the perception systems, and can these errors be accounted for in the path planning or control modules of the autonomy stack? In practice, this perception error can be from a multitude of sources, including errors in the sensor outputs themselves, as well as due to the algorithm used to process the sensor data. \textbf{I DO NOT UNDERSTAND THE POINT OF THIS ARGUMENT: IF YOU MODEL YOUR UNCERTAINTY AS STOCHASTIC VARIABLES, THEN THE UNCERTAINTY QUANTIFICATION INVOLVES STOCHASTIC VARIABLES. IF YOU MODEL YOUR UNCERTAINTY AS A DETERMINISTIC BOUND, THEN THE QUANTIFICATION WILL INVOLVE DETERMINISTIC BOUNDS. SO REPHRASE THE ARGUMENT, YOU CAN JUST SAY THAT ERROR ANALYSIS IS NOT AVAILABLE FOR VO OR SLAM SYSTEMS, IF THAT IS TRUE OF COURSE} In fact, to the best of the authors knowledge, the nature of the perception error is also unknown: are the errors bounded, normally distributed, multi-modal, or of a different form entirely? When using a Kalman Filter in an observable linear system subject to stochastic Gaussian disturbances, the state-estimation error is also a stochastic Gaussian distribution. However, an analogous claim does not exist for \ac{VO} or \ac{SLAM} systems. 

Currently, robust safety-critical planners/controllers are designed to handle specific forms of uncertainty. These include additive dynamics disturbances that are either bounded or stochastic~\cite{kolathaya2018input, black2023safety, culbertson2023input}, or state estimation errors that are, again, bounded or stochastic~\cite{dean2021guaranteeing, agrawal2022safe}. Therefore our perception module must return estimates and error bounds compatible with planners and controllers.

This paper takes two steps towards the goal of certified perception-planning-control algorithms. First, we propose a new \ac{CVO} algorithm for which (under appropriate assumptions) we can establish a bounded odometry error, i.e. error bounds on the relative pose between successive camera frames. As these errors accumulate over time, the position estimate will deviate from the ground truth position of the robot, as is commonly observed in \ac{VO} systems \cite{scaramuzza2011visual, yu2021vins}. If using mapping frameworks like VoxBlox \cite{oleynikova2017voxblox}, this can lead to some occupied regions marked free in the map. Therefore, our second key contribution is a \ac{CESDF} algorithm. Building on the framework of \cite{nvblox}, we introduce a deflation step, ensuring that the \ac{CESDF} always underestimates the distance to the nearest obstacle in a body-fixed frame. To the best of the authors' knowledge, this is the first algorithm that can certify the \ac{VO} and \ac{ESDF} outputs of a perception stack. Finally, we have performed experiments to demonstrate that the proposed algorithms can run in real-time.

The remainder of the paper is structured as follows. In \Cref{section:problem_statement} we formalize the problem, and in \Cref{section:method_overview} we describe conceptually the key components of our method. The main theoretical contributions are detailed in Sections \ref{sec:certified-visual-odometry} and \ref{sec:certified-esdf}. These theoretical contributions are assembled into a complete algorithm in \Cref{section:algorithms}. Finally in \Cref{sec:experiments} we report experiments demonstrating the proposed methods.

\subsubsection*{Notation} 
\label{sec:notation}

$\naturals= \{0, 1, 2, ...\}$ is the set of natural numbers. $\R, \Rnonneg, \Rplus$ denote reals, non-negative reals, and positive reals. $\S^3$ is the set of unit quaternions. $\SO(3)$ is the special orthogonal group. $\SE(3) = \SO(3) \times \R^3$ is the special Euclidean group. $\norm{v}_p$ denotes the $p$-norm of a vector, and $\norm{v}$ denotes the 2-norm. For matrices $A \in \R^{n \times m}$, $\norm{A}$ denotes the induced 2-norm, and $\fronorm{A}$ denotes the Frobenius norm. All eigenvectors are assumed to be unit-norm. 
A unit quaternion $q \in \S^3$ is denoted $q=\bmat{q_1 & q_2 & q_3& q_4}^T$, where the scalar component is last. The inverse is $q^{-1} = \bmat{-q_1 & -q_2 & -q_3& q_4}^T$. The quaternion product is $q_c = q_a \circ q_b = \Omega_1(q_a) q_b = \Omega_2(q_b) q_a$,  where
{\eqnN{
\Omega_1(q) = \bmat{ 
q_4 & -q_3 & q_2 & q_1\\
q_3 & q_4 & -q_1 & q_2 \\
-q_2 & q_1 & q_4 & q_3 \\
-q_1 & -q_2 & -q_3 & q_4
}, \\
\Omega_2(q) = \bmat{ 
q_4 & q_3 & -q_2 & q_1\\
-q_3 & q_4 & q_1 & q_2 \\
q_2 & -q_1 & q_4 & q_3 \\
-q_1 & -q_2 & -q_3 & q_4
}.
}}
Every $q\in\S^3$ is associated with a rotation $R \in \SO(3)$, $\overline{R a} = q \circ \overline a \circ q^{-1}$,  where $\overline v = \bmat{v^T & 0 }^T \in \R^4$ for any $v \in \R^3$. 
Therefore, we have the useful properties  $\Omega_1(q^{-1}) = \Omega_1^T(q)$,  $\Omega_2(q^{-1}) = \Omega_2^T(q)$, and $q \circ \overline a \circ q^{-1} = \Omega_2^T(q) \Omega_1(q) \bar a$.

A reference frame is a set of three orthonormal basis vectors and an origin. When a point $p$ is expressed in reference frame $M$, it is denoted $p \inframe{M} \in \R^3$. Two frames $A, B$ are be related by a rototranslation $(R, t) \in \SE(3)$,  $p\inframe{B} = \rotation{A}{B} p \inframe{A} + \translation{A}{B}$.
This paper uses the inertial frame $I$, and a mapping frame $M$. The body frame $B_k$ denotes the body-fixed reference frame at timestep $k$, i.e., when the $k$-th RGBD image is produced.  See also \Cref{tab:notation} in the appendix.

\section{Problem Statement}
\label{section:problem_statement}

The goal in this paper is to identify a subset of $\R^3$ in the body-frame that is certifiably obstacle-free. We constrain ourselves to using an \ac{RGBD} camera without access to an \ac{IMU}.\footnote{Using an \ac{IMU} can improve the accuracy of localization algorithms and provide robustness against changes in lighting conditions~\cite{qin2018vins, mohta2018experiments}. However, certifying error bounds is challenging due to the IMU biases. Incorporating an IMU is considered future work.} 
The depth sensor generates a measured pointcloud of the obstacles within its \ac{FOV}. The position of each point in the pointcloud could have some measurement error. We assume this error is bounded with a known bound.\footnote{According to manufacturer specifications, each point of the pointcloud has an absolute position error of 2\% of the distance from the camera~\cite{realsense}. Empirical studies have suggested a quadratic relationship~\cite{nguyen2012modeling}.} Then, we have the following problem statement:

\begin{problem}
\label{problem:overall}
Consider a robotic system with an onboard \ac{RGBD} camera operating at a fixed frame rate in a static environment. Suppose the depth camera produces pointclouds with a bounded position error in each point. Identify a subset $\Fcal \subset \R^3$ that is guaranteed to be obstacle free. 
\end{problem}

We assume that if a point $p \in \R^3$ is occupied, and within the camera's \ac{FOV}, it will be detected as an obstacle. This is a common implicit assumption in the mapping literature. Note, an infrared depth camera often fails to detect transparent obstacles (e.g., windows and glass doors). Such issues are beyond the scope of this paper.

\section{Method Overview}
\label{section:method_overview}

Our solution decomposes the certified perception problem into two steps, the certified state estimation problem and the certified mapping problem, as depicted in~\Cref{fig:block_diagram}. Although separate, these modules have been designed to integrate together. Here we describe the modules conceptually, and explain our choices in \cref{section:why_esdfs}. 

\begin{figure}
    \centering
    \includegraphics[width=0.96\linewidth]{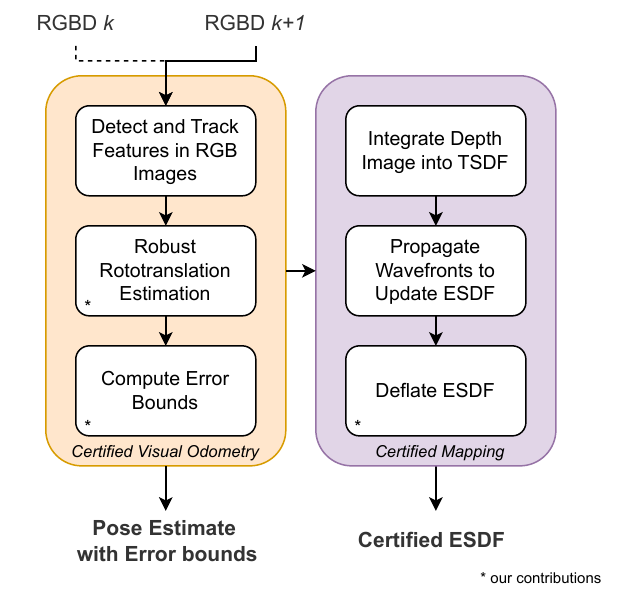}
    \caption{Block diagram describing our certified perception modules. Given successive \ac{RGBD} frames, we first use the visual odometry (orange) module to compute a relative pose estimate, and the associated error bounds. These are used in the mapping (purple) module to construct a 3D map of the obstacle geometry. Using the error bounds computed in \ac{CVO}, we can compute the \ac{ESDF} deflation that is sufficient to ensure correctness. }
    \label{fig:block_diagram}
\end{figure}

\subsection{C-VO}

First, we propose a new method for \ac{CVO}. By \emph{certified} we mean that our \ac{VO} algorithm produces an estimate of the odometry between successive timesteps, \emph{and} produces an upper-bound on the maximum error between the estimated odometry and the true odometry.

We adapt the robust pointcloud registration technique in~\cite{yang2020teaser} to estimate the rototranslation between successive timesteps. Using the bounded error of the sensed pointcloud, we derive an upperbound on the rototranslation estimation error. More precisely in \Cref{corrollary:rotation_bound} we determine a Frobenius norm bound on the rotation error, and in \Cref{lemma:translation_bound} we determine a two-norm bound on the translation error. 

\subsection{C-ESDF}

Second, we propose an algorithm to construct a \ac{CESDF} to represent the world. New depth images are integrated into into a voxelized representation of the world, where each voxel stores the distance to the nearest obstacle, an \ac{ESDF}. As the robot moves through the environment, the accumulated \ac{VO} drift implies that the mapping reference frame shifts relative to the inertial frame. Since the obstacles are not static in the mapping frame, the \ac{ESDF} can become incorrect. 

To account for this, we introduce a deflation step to~\cite{nvblox}. We derive a  recursive guarantee that if the \ac{CESDF} safely underestimates the distance to every obstacle at timestep $k$, after deflating the \ac{CESDF} it will again safely underestimate the \ac{SDF} at timestep $k+1$. 

We accomplish this by, at every frame, decrementing the \ac{CESDF} at each voxel based on the \ac{CVO} error bound. The deflation amount is different at each voxel, and is computed from the error bounds produced by the \ac{CVO} algorithm as described in \Cref{sec:certified-esdf}. This effectively causes the system to `forget' parts of the map not been observed recently.  The result is a map that is always guaranteed to be safe for use in path planning and control.

\subsection{Why Pointcloud Registration and ESDFs?}
\label{section:why_esdfs}

The proposed approach has some key properties that enables certification guarantees. Here we compare our strategy to possible alternatives.

\emph{Compared to Epipolar \ac{VO}~\cite{cvivsic2022soft2}:} A common technique in \ac{VO} is to use Epipolar constraints to determine the rototranslation between frames. However due to the geometric properties used in deriving the algorithm, it is not clear how to compute error bounds for this algorithm.

\emph{Compared to Octomap~\cite{hornung2013octomap}:} An Octomap stores the occupancy log-odds. However since odometry error is bounded by a norm ball, it is unclear how to update the log-odds.

\emph{Compared to \ac{SFC}~\cite{liu2017planning}:} Again, based on a norm-ball error bound, the algebra necessary to update the SFC polyhedron is not clear.\footnote{Consider a SFC $F = \{x \in \R^3 : Ax \leq b\}$. If the rototranslation from frame $k$ to $k+1$ is $(R, t) \in \SE(3)$, the new safe region is $F' = \{ x \in \R^3 : A R^T x \leq b + A R^T t \}$. However if there is uncertainty in $(R, t)$, to the best of our knowledge, there is no analytic method to compute $F'$.} 

\emph{Compared to inflating the obstacles:} One could maintain a list of observed obstacles, and inflate their radius based on the accumulated \ac{VO} error. With this approach however, the spheres could grow to occupy all of $\R^3$, preventing any new region from being certified safe. Instead, we choose to deflate safe regions. Although this can lead to a situation where no subset of $\R^3$ is certifiably safe, as new depth images are received, new regions can be added to the certified-free set. 
% \textbf{BUT THIS SIMILARLY CAN LEAD TO NO FREE SPACE. I THINK YOU SHOULD FIND SOME OTHER COMPELLING REASON IF YOU WANT TO KEEP THIS SECTION}

Thus, we choose to use the proposed \ac{CVO} and \ac{CESDF} algorithms. Naturally, from the \ac{CESDF} a \ac{SFC} or an Octomap could be extracted as required.

% \section{Method Overview}

% \begin{figure*}
%     \centering
%     \includegraphics[width=\linewidth]{}
%     \caption{High-level Block Diagram}
%     \label{fig:enter-label}
% \end{figure*}

% In this paper, we propose a technique to build a certifiably correct map of the obstacle environment a robot operates in, using only onboard sensing and computation. We demonstrate that the method can operate in real-time, using only a forward facing depth camera with limited field of view. Our key contribution is the introduction of certifiability into the visual odometry and mapping stack of a mobile robot.

% Our implementation (using an onboard Intel Realsense D455 camera) is summarized below. Let $\Ical_{k-1}$, $\Dcal_{k-1}$, $\Pcal_{k-1}$ represent the RGB image, depth image, and pointcloud received from the camera at frame $k-1$. When the $k$-th frame is received, we do the following:
% \begin{enumerate}
%         \item Identify features in $\Ical_k$. 
%         \item Perform sparse optical flow wrt to $\Ical_{k-1}$.
%         \item Identify the corresponding 3D points in $\Pcal_{k-1}$, $\Pcal_k$.
%         \item *Compute the rototranslation between $\Pcal_{k-1}, \Pcal_k$.
%         \item *Compute certifiable error bounds on the rototranslation
%         \item *Update the environment map, stored as a Signed Distance Field
% \end{enumerate}
% Compared to the state-of-the-art visual odometry and mapping techniques (see for example cite nvblox, fiesta, ...), in this paper, we modify steps marked with *. In doing so, we establish certain certifiability guarantees. 

\section{Theory: Certified Visual Odometry (C-VO)}
\label{sec:certified-visual-odometry}

Here we describe our proposed rototranslation estimation algorithm. Recall we assume a bounded sensor measurement error of the position of each point in the pointcloud. 

Consider successive timesteps $k, k+1$. First, we use classical feature detection and optical flow algorithms to identify features in RGB image $k$, and identify their corresponding position in image $k+1$~\cite{shi1994good, lucas1981iterative}.  Once projected to 3D, we have two pointclouds with a list of correspondences. This process is not perfect, and can lead to outliers. Thus, we design a robust rototranslation algorithm that is robust to a small number of outliers. See \Cref{section:algorithms} for additional implementation details and the effect of outliers.
The \ac{VO} problem is now a pointcloud registration problem: the rototranslation between the pointclouds is equivalent to the rototranslation between successive body frames. Thus the robust rototranslation estimation problem is as follows:
\begin{problem}
\label{problem:rototranslation}
  Let $\Acal = \{a_i\}_{i=1}^{N}, \Bcal = \{ b_i \}_{i=1}^{N}$ be two sets of points $(a_i, b_i \in \R^3)$ such that $\Acal, \Bcal$ are related by the model
  \eqn{
  \label{eqn:matched_rototranslation_problem}
  b_i = R a_i + t + \epsilon_i, \quad \norm{\epsilon_i} \leq \delta_i
  }
  where $(R, t) \in \SE(3)$ is the true rototranslation between pointclouds $\Acal, \Bcal$, and $\epsilon_i \in \R^3$ is sensor noise, with known bounds $\delta_i > 0$ for each $i \in \{1, ..., N\}$. Determine $(\hat R, \hat t) \in \SE(3)$ that solves the following problem:
  \eqn{
  \argmin{\hat R \in \SO(3), \hat t \in \R^3} \sum_{i=1}^{N} \min \left( \norm{ b_i - \hat R a_i - \hat t}^2, \delta_i^2 \right) \label{eqn:tls_rototrans}
  }
\end{problem}

Equation~\eqref{eqn:tls_rototrans} is a \ac{TLS} problem, since any term with $\norm{b_i - \hat R a_i - \hat t} > \delta_i$ will only contribute a fixed amount $\delta_i^2$ to the cost. This truncated cost allows the optimization to be robust to outliers. Although nonconvex, through a sequence of reformulations, we obtain a computationally-efficient method to solve this problem. The method is largely inspired by~\cite{yang2020teaser}, but there are some key details (e.g., the computation of the error bounds) that are different. For completeness, we explain the full procedure.

\subsection{Rotation Estimation}
\label{section:rotation_estimation}
First we decouple the rotation and translation. Notice for any pair of points $i, j \in \{1, ..., N\}$, 
\eqn{
b_i - b_j  = R (a_i - a_j) + \epsilon_i - \epsilon_j
}
is independent of $t$.  Define $ a_{ij} = a_i - a_j$, $b_{ij} = b_i - b_j$, $\epsilon_{ij} = \epsilon_i - \epsilon_j$. Then, 
\eqn{
\norm{\epsilon_{ij}} \leq \norm{\epsilon_i} + \norm{\epsilon_j} \leq \delta_{ij}
}
where $\delta_{ij} =  \delta_i + \delta_j$.  To capture the pairs, we construct an undirected graph $G = (V, E)$, where $V = \{ 1, ..., N \}$ and $E \subset V \times V$. Ideally, $G$ should be a complete graph, with $\abs{E} = N(N-1)/2$ edges. In practice, other graph topologies can be used to improve computational performance, as discussed in \cref{section:graph_fraction}.
\begin{problem}
\label{problem:rotation}
    Consider the setup in~\Cref{problem:rototranslation}. Let $G = (V, E)$ be a graph connecting pairs of points. Determine $\hat R \in \SO(3)$ that solves 
    \eqn{
    \argmin{\hat R \in \SO(3)} \sum_{(i, j) \in E} \min \left( \norm{b_{ij} - \hat R a_{ij}}^2, \delta_{ij}^2 \right)
    }
\end{problem}

To solve Problem~\ref{problem:rotation}, we use \ac{GNC}~\cite{yang2020graduated}, the details of which are omitted in the interest of space. For \ac{GNC} to be efficient, we require a fast method to solve an associated \ac{WLS} problem. For Problem~\ref{problem:rotation}, the  \ac{WLS} problem has an analytic solution:
\begin{lemma}
\label{lemma:wls_rot}
    Consider \ac{WLS} problem
    \eqn{
    \argmin{\hat R \in \SO(3)} \sum_{(i,j) \in E} w_{ij} \norm{b_{ij} - \hat R a_{ij}}^2 \label{eqn:wls_rot}
    }
    given weights $w_{ij} \in [0, 1]$. Define\footnote{Recall $ \overline a_{ij} = \bmat{a_{ij}^T &  0}^T$, $\overline b_{ij} = \bmat{b_{ij}^T & 0}^T$. See \Cref{sec:notation} for $\Omega_1, \Omega_2$.}
    \begin{subequations}
    \eqn{
    Q_{ij} &= \Omega_1^T(\overline b_{ij}) \Omega_2(\overline a_{ij}) + \Omega_2^T(\overline a_{ij}) \Omega_1(\overline b_{ij})\\
    Q &= -\sum_{(i, j) \in E} w_{ij} Q_{ij} \label{eqn:wls_Q}
    }
    \end{subequations}
    where $Q_{ij}, Q \in \R^{4 \times 4}$ are symmetric matrices. Let 
    \eqn{\hat q = \texttt{eigvec}(Q) \in \R^4}
    be the (unit-norm) eigenvector corresponding to the smallest eigenvalue of $Q$. Then, $\hat q$ is the unit quaternion corresponding to the rotation matrix $\hat R$, the solution to~\eqref{eqn:wls_rot}. 
\end{lemma}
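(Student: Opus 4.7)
The plan is to parameterize $\hat R$ by a unit quaternion $\hat q \in \S^3$, rewrite the cost as a quadratic form $\hat q^T Q \hat q + \text{const}$, and then recognize the unit-norm-constrained minimization as a Rayleigh quotient problem whose solution is the smallest eigenvector of $Q$. Since the map $\S^3 \to \SO(3)$ is surjective, minimizing over unit quaternions is equivalent to minimizing over rotations (up to the antipodal identification $\hat q \sim -\hat q$, which yields the same $\hat R$ and the same cost).

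The central algebraic step is the following identity. For each edge $(i,j)$, write the residual in $\R^4$ using the zero-padding $\overline{\cdot}$, so that $\|b_{ij} - \hat R a_{ij}\|^2 = \|\overline{b}_{ij} - \hat q \circ \overline{a}_{ij} \circ \hat q^{-1}\|^2$. Right-multiplying the 4-vector inside the norm by $\hat q$ is an isometry (since $\|\hat q\| = 1$ makes right-composition an orthogonal map on $\R^4$), so
\eqn{
\|b_{ij} - \hat R a_{ij}\|^2
= \|\overline{b}_{ij} \circ \hat q - \hat q \circ \overline{a}_{ij}\|^2
= \|M_{ij}\, \hat q\|^2,
}
where $M_{ij} := \Omega_1(\overline{b}_{ij}) - \Omega_2(\overline{a}_{ij})$, using the two equivalent matrix representations of the quaternion product given in the Notation section.

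Next I would expand $M_{ij}^T M_{ij}$ into four blocks. The diagonal blocks simplify using the identities $\Omega_1^T(\overline{v})\Omega_1(\overline{v}) = \|v\|^2 I_4$ and $\Omega_2^T(\overline{v})\Omega_2(\overline{v}) = \|v\|^2 I_4$, which follow from direct computation on the explicit $\Omega_1, \Omega_2$ matrices (the columns of each $\Omega_\ell(\overline v)$ are orthogonal with common norm $\|v\|$). Thus $M_{ij}^T M_{ij} = (\|a_{ij}\|^2 + \|b_{ij}\|^2) I_4 - Q_{ij}$, and summing with weights gives
\eqn{
\sum_{(i,j) \in E} w_{ij} \|b_{ij} - \hat R a_{ij}\|^2
= C + \hat q^T Q\, \hat q,
}
where $C = \sum w_{ij}(\|a_{ij}\|^2 + \|b_{ij}\|^2)$ is independent of $\hat q$ (using $\|\hat q\|=1$), and $Q$ is exactly as defined in the lemma. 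Minimizing a quadratic form over the unit sphere $\S^3$ is the standard Rayleigh-quotient problem, so the minimizer is any unit eigenvector of $Q$ associated with its smallest eigenvalue, i.e., $\hat q = \texttt{eigvec}(Q)$.

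The main obstacles are mostly bookkeeping: verifying that right-composition by a unit quaternion really is an $\R^4$ isometry (equivalently that $\Omega_2(\hat q)$ is orthogonal, which follows from $\Omega_2^T(\hat q)\Omega_2(\hat q) = \Omega_2(\hat q^{-1})\Omega_2(\hat q) = \Omega_2(\hat q \circ \hat q^{-1}) = I$), and checking that $M_{ij}^T M_{ij}$ has the claimed diagonal-plus-$Q_{ij}$ structure with the correct sign on the cross term. A minor point worth noting in the write-up is that the quaternion-to-rotation map is two-to-one, so the minimizer is unique only up to sign; both $\pm\hat q$ yield the same rotation matrix $\hat R$, so the conclusion of the lemma is unaffected.
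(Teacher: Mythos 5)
Your proposal is correct and follows essentially the same route as the paper's proof: reparameterize by a unit quaternion, reduce the weighted cost to a constant plus the quadratic form $\hat q^T Q \hat q$ with the same $Q_{ij}$, and solve the resulting Rayleigh-quotient problem by taking the smallest eigenvector. The only (cosmetic) difference is that you obtain the quadratic form by writing the residual as $\left(\Omega_1(\overline b_{ij})-\Omega_2(\overline a_{ij})\right)\hat q$ after right-multiplying by $\hat q$, whereas the paper expands $\norm{\overline b_{ij} - \Omega_2^T(\hat q)\Omega_1(\hat q)\overline a_{ij}}^2$ directly and converts the cross term using the same commutation identities; your remark about the $\pm\hat q$ ambiguity is a fair point the paper leaves implicit.
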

\begin{proof}
    Adapted from \cite{yang2020teaser}. See appendix~\ref{sec:proof:wls_rot}.
\end{proof}

Notice the WLS problem~\eqref{eqn:wls_rot} requires determining an eigenvector of a symmetric real matrix of fixed size $\R^{4\times 4}$ independent of $N$. This makes \Cref{problem:rotation} efficient.

Next, we establish the rotation error bound.
\begin{lemma}
\label{lemma:rotation_bound}
Consider the setup in~\Cref{problem:rotation}. Let $\hat R \in \SO(3)$ solve~\Cref{problem:rotation}. Then the rotation error is bounded by
\eqn{
\label{eqn:rotation_bound}
\fronorm{R - \hat R} \leq \epsilon_R = \sqrt{\frac{2 \norm{z}^2}{\sigma_2^2(A) + \sigma_3^2(A)} }, 
}
where 
$z \in \R^{|E|}, A \in \R^{3 \times |E|}$ are 
\eqn{
z = \bmat{ \cdots & z_{ij} & \cdots}^T, \quad 
    A = \bmat{ \cdots & \frac{a_{ij}}{\norm{a_{ij}}} & \cdots}, \label{eqn:A_matrix}
}
and $z_{ij} = \left(\norm{b_{ij} - \hat R a_{ij}} + \delta_{ij}\right)/\norm{a_{ij}}$ for each $(i,j) \in E$. $\sigma_1(A) \geq \sigma_2(A) \geq \sigma_3(A)$ are the singular values of $A$. 
\end{lemma}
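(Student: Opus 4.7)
The plan is to first obtain a Frobenius-norm bound on $E_R A$, where $E_R := R - \hat R$, and then to use the special structure of $E_R$ as a difference of two elements of $\SO(3)$ to translate that into a bound on $\|E_R\|_F$ itself.

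First I would rewrite the model~\eqref{eqn:matched_rototranslation_problem} in pairwise form, $b_{ij} = R a_{ij} + \epsilon_{ij}$ with $\|\epsilon_{ij}\| \leq \delta_{ij}$, and apply the triangle inequality:
\begin{align*}
\|E_R a_{ij}\| = \| b_{ij} - \hat R a_{ij} - \epsilon_{ij}\| \leq \|b_{ij} - \hat R a_{ij}\| + \delta_{ij}.
\end{align*}
Dividing by $\|a_{ij}\|$ and stacking over $(i,j) \in E$, the columns of $E_R A$ are precisely $E_R (a_{ij}/\|a_{ij}\|)$, so
\begin{align*}
\|E_R A\|_F^2 = \sum_{(i,j)\in E} \|E_R a_{ij}/\|a_{ij}\|\|^2 \leq \sum_{(i,j)\in E} z_{ij}^2 = \|z\|^2.
\end{align*}

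The second, more interesting ingredient uses the fact that $E_R$ is the difference of two rotations. Writing $E_R = (R\hat R^T - I)\hat R$ and noting that $R\hat R^T \in \SO(3)$ fixes its axis of rotation, $E_R$ has a one-dimensional kernel, hence rank at most 2. A Rodrigues/axis-angle calculation (or directly computing the singular values of $R\hat R^T - I$) shows that the two nonzero singular values of $E_R$ are equal to some common value $s$, so $\|E_R\|_F^2 = 2s^2$. Let $v_1, v_2 \in \reals^3$ be orthonormal right singular vectors corresponding to $s$. Using the SVD of $E_R$ and setting $M := A A^T$,
\begin{align*}
\|E_R A\|_F^2 = \tr{E_R M E_R^T} = s^2 \bigl(v_1^T M v_1 + v_2^T M v_2\bigr).
\end{align*}

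The key inequality is a Courant--Fischer-type bound: for any pair of orthonormal vectors $v_1, v_2 \in \reals^3$,
\begin{align*}
v_1^T M v_1 + v_2^T M v_2 \geq \lambda_2(M) + \lambda_3(M) = \sigma_2^2(A) + \sigma_3^2(A),
\end{align*}
achieved when $\operatorname{span}\{v_1, v_2\}$ equals the subspace of the two smallest eigenvalues of $M$. Plugging this in yields
\begin{align*}
\|z\|^2 \geq \|E_R A\|_F^2 \geq s^2 (\sigma_2^2(A) + \sigma_3^2(A)) = \tfrac{1}{2}\|E_R\|_F^2 (\sigma_2^2(A) + \sigma_3^2(A)),
\end{align*}
which rearranges to the claimed bound~\eqref{eqn:rotation_bound}.

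The main obstacle I expect is step three: recognising and proving that $E_R$ has equal nonzero singular values with a kernel along the relative rotation axis. Without this structural fact, a naive bound would only give $\|E_R\|_F^2 \leq \|z\|^2 / \sigma_3^2(A)$, which is both looser and fails when $A$ has near-degenerate column span (a common regime, since features often lie near a plane). The factor of two gained from $s_1 = s_2$ and the ability to drop $\sigma_1^2(A)$ from the denominator are precisely what makes the bound non-vacuous in practice.
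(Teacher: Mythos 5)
Your proposal is correct and follows essentially the same route as the paper's proof: the data-fitting bound $\fronorm{(R-\hat R)A} \le \norm{z}$, the Rodrigues/axis-angle observation that $R-\hat R$ has singular values $\{s,s,0\}$ so that $\fronorm{R-\hat R}^2 = 2s^2$, and the lower bound $\fronorm{(R-\hat R)A}^2 \ge s^2\left(\sigma_2^2(A)+\sigma_3^2(A)\right)$. The only cosmetic difference is that you phrase the final step as a Ky Fan/Courant--Fischer minimum over orthonormal pairs, whereas the paper writes $\fronorm{\Sigma V^T A}^2 = \lambda\left(\fronorm{V^TA}^2 - \norm{(V^TA)_3}^2\right)$ and bounds $\norm{(V^TA)_3}\le\sigma_1(A)$ --- the same inequality in different clothing.
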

\begin{proof}
Adapted from \cite[Thm.~36]{yang2020teaser}. See appendix~\ref{section:proof:rotation_bound}.
\end{proof}

\begin{remark}
    The proof of~\Cref{lemma:rotation_bound} does not depend on the algorithm used to obtain $\hat R$, but only on the bounds in~\eqref{eqn:matched_rototranslation_problem}. Thus~\Cref{lemma:rotation_bound} is applicable even with alternative algorithms. 
\end{remark}

We provide some intuition for~\eqref{eqn:rotation_bound}. First, $\epsilon_R^2$ is proportional to $\sum z_{ij}^2$. This term captures the both the re-projection error $\norm{b_{ij} - \hat R a_{ij}}$, and the assumed noise bound $\delta_{ij}$. Second, $\epsilon_R^2$ is inversely proportional to $\sigma_2^2(A) + \sigma_3^2(A)$. These singular values characterize the distribution of points - if the points all lie in a single line, $\sigma_2(A) = \sigma_3(A) = 0$ and $\epsilon_R$ tends to infinity. This corresponds to a case where there is not enough information in the pointclouds to determine the rotation. 

While the bound in \Cref{lemma:rotation_bound} is correct, we can further tighten the bounds. Consider how the error bound scales with $|E|$. To first order, the numerator $\norm{z}^2$ scales with $|E|$. The denominator contains singular values of an $3\times |E|$ matrix. The singular values will be larger when the columns of $A$ are orthogonal to each other, and therefore, only a small subset of the columns of $A$ contribute to large singular values. Thus we can achieve tighter error bounds if a small subgraph of $G$ is used. This leads to the following Lemma. 

\begin{lemma}
\label{corrollary:rotation_bound}
    Consider the setup in~\Cref{lemma:rotation_bound}. The maximum rotation error is also bounded by 
    \eqn{
    \fronorm{R - \hat R} \leq \epsilon_R = \sqrt{\frac{2 \norm{\tilde z}^2}{\sigma_2^2\left(\tilde A\right) + \sigma_3^2\left(\tilde A\right)} } \label{eqn:rotation_bound_small}
    }
    where $\tilde z \in \R^3$ and $\tilde A \in \R^{3 \times 3}$ are defined as
    \eqn{
\tilde z = \bmat{ z_{ij} &  z_{ik} & z_{il}}^T, \quad 
    \tilde A = \bmat{ \frac{a_{ij}}{\norm{a_{ij}}} & \frac{a_{ik}}{\norm{a_{ik}}} & \frac{a_{il}}{\norm{a_{il}}}}, \label{eqn:A_matrix_small}
}
    where $(i,j),(i,k), (i,l)$ are any three edges selected from graph $G$.
\end{lemma}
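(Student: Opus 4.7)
The plan is to carry out the same argument that underlies Lemma \ref{lemma:rotation_bound}, but applied only to the three chosen edges rather than all of $E$. As the remark after that lemma already points out, the per-edge inequality $\norm{(R - \hat R) a_{ij}} \leq \delta_{ij} + \norm{b_{ij} - \hat R a_{ij}}$ is just the triangle inequality combined with the noise bound $\norm{\epsilon_{ij}} \leq \delta_{ij}$. In particular, it does not rely on the optimality of $\hat R$ nor on any property of the rest of the graph $G$, so it remains valid on any subset of edges.

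First I would divide each per-edge inequality by $\norm{a_e}$ for $e \in \{(i,j),(i,k),(i,l)\}$, obtaining $\norm{(R - \hat R) \, a_e/\norm{a_e}} \leq z_e$. Stacking the three unit vectors $a_e/\norm{a_e}$ as the columns of $\tilde A$, and using the identity $\fronorm{MX}^2 = \sum_k \norm{M x_k}^2$ summed over the columns $x_k$ of $X$, gives
\[
\fronorm{(R - \hat R)\, \tilde A}^2 \leq \norm{\tilde z}^2.
\]
Next I would invoke the rotation-specific step: because $R, \hat R \in \SO(3)$, a short computation shows $M^T M = \tfrac{1}{2}\fronorm{M}^2 (I - uu^T)$, where $M = R - \hat R$ and $u$ is the (unit) axis of the relative rotation $R^T \hat R$. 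Hence for any $A \in \R^{3 \times n}$,
\[
\fronorm{MA}^2 = \tr{M^T M A A^T} = \tfrac{1}{2}\fronorm{M}^2 \bigl( \fronorm{A}^2 - \norm{A^T u}^2 \bigr) \geq \tfrac{1}{2}\fronorm{M}^2 \bigl( \sigma_2^2(A) + \sigma_3^2(A) \bigr),
\]
where the last inequality uses $\norm{A^T u}^2 \leq \sigma_1^2(A)$ for unit $u$, together with $\fronorm{A}^2 = \sigma_1^2(A) + \sigma_2^2(A) + \sigma_3^2(A)$. Applying this bound with $A = \tilde A$ and combining with the previous display immediately produces \eqref{eqn:rotation_bound_small}.

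The only real obstacle is verifying the structural identity $M^T M = \tfrac{1}{2}\fronorm{M}^2 (I - uu^T)$, which follows from the fact that $R^T \hat R \in \SO(3)$ has eigenvalues $\{1, e^{i\theta}, e^{-i\theta}\}$, making $M^T M = 2I - R^T \hat R - \hat R^T R$ a rank-two symmetric matrix with two equal nonzero eigenvalues and null-space spanned by the rotation axis. This identity is almost certainly already established inside the proof of Lemma \ref{lemma:rotation_bound}, in which case the corollary is obtained essentially for free by rerunning that argument on a three-edge subgraph instead of all of $E$; the choice to take three edges sharing vertex $i$ is just convenient notation, as the derivation is insensitive to which three edges are picked.
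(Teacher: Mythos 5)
Your proposal is correct and follows essentially the same route as the paper: the paper's proof simply applies Lemma~\ref{lemma:rotation_bound} to the three-edge subgraph with the same $\hat R$, which is valid precisely because (as you note, and as Remark~1 states) that lemma's derivation uses only the per-edge noise bounds and not the optimality of $\hat R$ on the full graph. Your write-up just unpacks that application explicitly, including the structural identity $M^T M = \tfrac{1}{2}\fronorm{M}^2(I - uu^T)$, which is indeed established (in eigenvalue form) inside the paper's proof of Lemma~\ref{lemma:rotation_bound}.
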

\begin{proof}
Consider a subgraph of $G$ consisting of only vertices $\{i, j, k, l\}$, and edges $(i,j), (i,k), (i,l)$. Using the same rotation estimate $\hat R$, apply~\Cref{lemma:rotation_bound}. This leads to~\eqref{eqn:rotation_bound_small}.
\end{proof}

Naturally, the edges should be selected to minimize $\epsilon_R$. This is an NP-hard subset selection problem~\cite{de2007subset, avron2013faster}.  Although heuristic methods exist, they require constructing $A$ in~\eqref{eqn:A_matrix} and computing its SVD. This alone takes over 10~ms. Instead, we avoid constructing $A$, randomly select four nodes of $G$, and compute $\epsilon_R$ using \Cref{corrollary:rotation_bound}. We repeat this for some number of iterations, and use the tightest bound calculated. As demonstrated in \Cref{fig:iterations}, as the number of iterations increases, the error bound gets tighter. In our implementations we perform 1000~iterations.  This takes under a millisecond, and still yields reasonably tight bounds.

\begin{figure}
    \centering
    \includegraphics[width=0.99\linewidth]{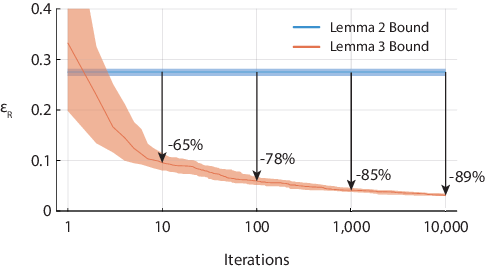}
    \caption{Rotation error bounds due to \Cref{corrollary:rotation_bound} (orange) are tighter than due to \Cref{lemma:rotation_bound} (blue). Each line shows the median and interquartile range of 100 trials of rototranslation estimation on synthetic data. }
    \label{fig:iterations}
\end{figure}

% \begin{algorithm}
%     \label{alg:rotation_bound}
%   \DontPrintSemicolon
%   \caption{Rotation Error Bound Computation}
%   \SetKw{KwParams}{Parameters:}
%   \SetKwProg{When}{When}{}{end}
%   \SetKwProg{Fn}{function}{}{end}
%   \SetKwProg{For}{for}{}{end}
%   Compute $z, A$ as in \Cref{lemma:rotation_bound}\;
%   Compute the pivoted-QR decomposition of $A$, $AP = QR$. Let $\tilde A = AP$, $\tilde z = P^T z$.\;
%   \label{alg:rotation_bound:forloop}
%   \For{l in \text{range}$(3, |E|)$ let} 
%   {
%   \eqnN{
%   \epsilon_R^l = \sqrt{ \frac{2 \norm{\tilde z_{1:l}}^2}{\sigma_2^2(\tilde A_{1:l}) + \sigma_3^2(\tilde A_{1:l})} }
%   }
%   }
%   where $\tilde A_{1:l} \in \R^{3 \times l}$ denotes the first $l$ columns of $\tilde A$, and $\tilde{z}_{1:l}$ denotes the first $l$ elements of $\tilde z$.\;
%   \Return minimum of all $\epsilon_R^l$.
% \end{algorithm}
% In practice, for performance, we can limit the range in \Cref{alg:rotation_bound:forloop} to $L \ll |E|$. We observed that $L \sim 100$ achieves good performance.

% \begin{remark}
%     \todo{Provide some interpretation for the bound, talk about the spherical world problem.}
% \end{remark}

\subsection{Translation Estimation}

Once \Cref{problem:rotation} is solved,  we can solve for the translation:
\begin{problem}
\label{problem:translation}
    Consider the setup in \Cref{problem:rototranslation}. Suppose $\hat R \in \SO(3)$ solves \Cref{problem:rotation}. Determine $\hat t \in \R^3$ that solves 
    \eqn{
\argmin{\hat t \in \R^3}  \sum_{i=1}^{N} \min \left( \norm{ b_i - \hat R a_i - \hat t}^2, \delta_i^2 \right). \label{eqn:tls_trans}
}
\end{problem}

As before, the \ac{TLS} is solved using \ac{GNC}, where the corresponding \ac{WLS} problem is a standard least-squares problem. The following establishes an error bound.

\begin{lemma}
\label{lemma:translation_bound}
    Consider the setup in Problem~\ref{problem:rototranslation}. Let $\hat R \in \SO(3), \hat t \in R^3$ solve Problem~\ref{problem:rotation} and~\ref{problem:translation}. Then the translation error is bounded by
    \eqn{
    \norm{t - \hat t} \leq \epsilon_t =  \min_{i \in \{1, ..., N\}} \left( \epsilon_R \norm{a_i} + \norm{\phi_i} + \delta_i \right) \label{eqn:translation_bound}
    }
    where $\phi_i = b_i - \hat R a_i - \hat t$ and $\epsilon_R$ is defined in~\eqref{eqn:rotation_bound}.
\end{lemma}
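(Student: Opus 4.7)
The plan is to exploit the measurement model~\eqref{eqn:matched_rototranslation_problem} to write $t$ in terms of any single index $i$, and to compare it against the analogous expression for $\hat t$ using the residual $\phi_i$.

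First I would fix an arbitrary index $i \in \{1,\dots,N\}$ and rearrange~\eqref{eqn:matched_rototranslation_problem} to obtain $t = b_i - R a_i - \epsilon_i$. By definition of $\phi_i$, I also have $\hat t = b_i - \hat R a_i - \phi_i$. Subtracting these two identities gives
\eqnN{
t - \hat t = (\hat R - R) a_i + \phi_i - \epsilon_i,
}
so that by the triangle inequality
\eqnN{
\norm{t - \hat t} \leq \norm{(\hat R - R) a_i} + \norm{\phi_i} + \norm{\epsilon_i}.
}

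Next I would bound each summand. The last term satisfies $\norm{\epsilon_i} \leq \delta_i$ by assumption. For the first term, I would use the submultiplicative property of the induced $2$-norm together with the standard inequality $\norm{M} \leq \fronorm{M}$ for $M \in \R^{3\times 3}$, followed by the rotation bound~\eqref{eqn:rotation_bound} of \Cref{corrollary:rotation_bound} (or \Cref{lemma:rotation_bound}), yielding
\eqnN{
\norm{(\hat R - R) a_i} \leq \norm{\hat R - R}\, \norm{a_i} \leq \fronorm{\hat R - R}\, \norm{a_i} \leq \epsilon_R \norm{a_i}.
}
Combining these bounds gives $\norm{t - \hat t} \leq \epsilon_R \norm{a_i} + \norm{\phi_i} + \delta_i$ for this particular $i$. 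Since $i$ was arbitrary and the left-hand side does not depend on $i$, I can take the minimum over $i \in \{1,\dots,N\}$ to obtain~\eqref{eqn:translation_bound}.

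There is no substantive obstacle here: the argument is a direct computation once one observes that the residual $\phi_i$ plays the role of $\hat{\epsilon}_i$ under the estimated rototranslation, and that the rotation error bound already developed in \Cref{corrollary:rotation_bound} converts into an $a_i$-weighted contribution through $\fronorm{\cdot} \geq \norm{\cdot}$. The only subtle point worth mentioning is that the minimum over $i$ is legitimate precisely because the inequality is derived separately for each $i$, so one may retain the tightest such bound.
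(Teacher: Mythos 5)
Your proof is correct and follows essentially the same route as the paper's: equate the two expressions for $b_i$ to get $t - \hat t = (\hat R - R)a_i + \phi_i - \epsilon_i$, apply the triangle inequality, pass from the induced $2$-norm to the Frobenius norm, and minimize over $i$. No gaps.
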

\begin{proof}
This bound is obtained using the triangle inequality. See appendix~\ref{sec:proof:translation_bound}.
\end{proof}

To summarize, the \ac{CVO} algorithm computes the rototranslation between successive frames by decomposing it into separate rotation and translation problems. The corresponding error bounds are obtained in~\Cref{corrollary:rotation_bound} and \ref{lemma:translation_bound}.

\begin{remark}
The main difference of this work wrt to~\cite{yang2020teaser} is that we use \ac{GNC} for rototranslation estimation. \ac{GNC} is robust to a small number of outliers, and recovers the global optimal in the absence of outliers~\cite{yang2020graduated}. The QCQP verification in~\cite{yang2020teaser} could be used, but is prohibitively slow for realtime \ac{VO}. Furthermore, it cannot handle motion blur.

\Cref{lemma:wls_rot} and \ref{lemma:rotation_bound} were reported in~\cite{yang2020teaser}. \Cref{corrollary:rotation_bound} and \Cref{lemma:translation_bound} are new results that provide computationally efficient methods to obtain tighter bounds.
\end{remark}

\section{Theory: Certified Mapping}
\label{sec:certified-esdf}

This section describes our mapping algorithm. The challenge is to account for the drift in localization from a \ac{VO} algorithm, in practice often ignored by mapping algorithms. Since we do not make this simplifying assumption, we review \ac{ESDF}s using the notation of this paper. 

\subsection{Background}

Let $I$ be an inertial frame.  Let $\Ocal \subset \R^3$ be the set of obstacles (closed with no isolated points), assumed stationary in $I$. $B_k$ is the robot body frame when the $k$-th image is received. The mapping frame $M$ is (ideally) stationary relative to $I$, but due to \ac{VO} drift, $M$ shifts relative to $I$. 

The \ac{ESDF} is defined as the function $d: \R^3 \to \R$, 
\eqn{
\label{eqn:esdf}
d(p) = \min_{o \in \Ocal} \norm{ o - p},
}
the distance between the point $p$ and the nearest obstacle. To evaluate~\eqref{eqn:esdf},  $o$ and $p$ must be expressed in a common frame. Recall $p\inframe{A}$ denotes $p$ expressed in frame $A$. 
To represent the \ac{ESDF} computationally, we discretize the environment into a grid of voxels, and store the \ac{ESDF} at each voxel. Since this is done in the mapping frame, it is denoted $d_M : \R^3 \to \R$. 

For safety-critical path planning and control, we need the \ac{ESDF} at body-fixed positions. The common approach is to assume the odometry is exact, and determine $d(p\inframe{B_k})$ by expressing it in the map frame and evaluating $d_M$:
\eqn{
d(p\inframe{B_k}) \approx d_M( \hat p\inframe{M}) = d_M( \underbrace{ \estrotation{B_k}{M} p\inframe{B_k} + \esttranslation{B_k}{M}  }_{\hat p \inframe{M}} ) \label{eqn:incorrect_esdf}
}
However, since the estimate $(\estrotation{B_k}{M}, \esttranslation{B_k}{M})$ is inexact, this method can lead to over- or underestimates. Overestimated distances are unsafe since they could lead to collisions.

\subsection{Proposed Approach}

The goal is to construct an \ac{ESDF} that is safe, i.e., underestimates the distance to obstacles. More precisely, 
\begin{definition}
\label{def:correct}
    Consider an environment with obstacles $\Ocal \subset \R^3$, assumed static in frame $I$. Let the \ac{ESDF} of $\Ocal$ be $d: \R^3 \to \R$. A \emph{Certified-ESDF} (C-ESDF) at timestep $k$ is a function $d_M^k: \R^3 \to \R$, such that for all points $p\inframe{B_k} \in \R^3$, 
\eqn{
d(p\inframe{B_k}) &\geq d_M^k( \hat p\inframe{M}) = d_M^k(\estrotation{B_k}{M} p\inframe{B_k} + \esttranslation{B_k}{M}) \label{eqn:correct_esdf}
}
where $\estrototranslation{B_k}{M} \in \SE(3)$ is the estimated rototranslation between $B_k$ and $M$.
\end{definition}

Comparing \eqref{eqn:incorrect_esdf} with \eqref{eqn:correct_esdf}, the goal of certification is to change the $\approx$~into~$\geq$. That is, a Certified-ESDF is one where for any body-fixed point $p\inframe{B_k}$, if the point is expressed in the mapping frame \emph{using the estimated rototranslation}, we have \emph{underestimated} the distance to the nearest obstacle:
\eqn{
\underbrace{d(p\inframe{B_k}) = \min_{o \in \Ocal} \norm{ p\inframe{B_k} - o\inframe{B_k}}}_{\text{true SDF}} &\geq \underbrace{d_M(\estrotation{B_k}{M} p\inframe{B_k} + \esttranslation{B_k}{M})}_{\text{estimated SDF}}.
}

To accomplish this, we propose a strategy of deflating the \ac{ESDF}. We derive a recursive guarantee to ensure the \ac{ESDF} remains certified for all $k$. Note, \Cref{theorem:new_radius} only describes the deflation step: see \Cref{section:algorithms} for the complete algorithm. We start with the following assumption:
\begin{assumption}
\label{assumption:vo_is_correct}
Assume the rototranslation error between successive timesteps is bounded by known bounds
    \eqnN{
    \fronorm{\Brotation{k}{k+1} - \estBrotation{k}{k+1}} \leq \epsilon_{R, k}, \quad
    \norm{\Btranslation{k}{k+1} - \estBtranslation{k}{k+1}} \leq \epsilon_{t, k}.
    }
\end{assumption}

Notice that \Cref{assumption:vo_is_correct} is exactly the result obtained  in the  \ac{CVO} algorithm, specifically \Cref{corrollary:rotation_bound} and \Cref{lemma:translation_bound}.

Consider the following update rule to construct the $(k+1)$-th \ac{ESDF} from the $k$-th \ac{ESDF}:
\eqn{
d_M^{k+1}(p\inframe{M}) = d_M^k(p\inframe{M}) -  \underbrace{\left ( \epsilon_{R, k} \norm{ \hat p\inframe{B_{k+1}}  -  \estBtranslation{k}{k+1} } + \epsilon_{t, k}  \right) }_{\Delta} \label{eqn:new_radius}
}
for all $p\inframe{M} \in \R^3$, where $\hat p\inframe{B_{k+1}} =  \estrotation{M}{B_{k+1}} p\inframe{M} + \esttranslation{M}{B_{k+1}}$. Notice the correction to \ac{CESDF}, $\Delta$,  is different at each $p\inframe{M}$.

\Cref{theorem:new_radius} certifies the new \ac{ESDF}:

\begin{theorem}
\label{theorem:new_radius}
Let \Cref{assumption:vo_is_correct} hold. Suppose at timestep $k\in \naturals$ the \ac{ESDF} $d_M^k : \R^3 \to \R$ is a Certified-ESDF by \Cref{def:correct}, 
i.e., $\forall p\inframe{B_k} \in \R^3$ 
\eqn{
d(p\inframe{B_{k}}) \geq d_M^{k}(\underbrace{\estrotation{B_k}{M} p\inframe{B_k} + \esttranslation{B_k}{M}}_{ p\inframe{B_k} \text{ expressed in } M } ).
}
Then, if the ($k+1$)-th \ac{ESDF} is constructed using \eqref{eqn:new_radius}, the $(k+1)$-th ESDF is  a Certified-ESDF. That is, $\forall p\inframe{B_{k+1}} \in \R^3$
\eqn{
    d(p\inframe{B_{k+1}}) \geq d_M^{k+1}(\underbrace{\estrotation{B_{k+1}}{M} p\inframe{B_{k+1}} + \esttranslation{B_{k+1}}{M}}_{ p\inframe{B_{k+1}} \text{ expressed in } M} ). \label{eqn:conclusion}
    }
    
    % where $\hat p\inframe{M} = \estrotation{B_{k+1}}{M} p\inframe{B_{k+1}} + \esttranslation{B_{k+1}}{M}$.
\end{theorem}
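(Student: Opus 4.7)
The plan is a one-step induction: given any query $p\inframe{B_{k+1}} \in \R^3$, I will (i) map it back to body frame $B_k$ using the \emph{true} inter-frame rototranslation, (ii) invoke the inductive hypothesis at time $k$, (iii) quantify the discrepancy between ``where $p\inframe{B_{k+1}}$ truly lives in $M$'' and the estimate $\hat p\inframe{M}$ that we actually query, and (iv) absorb that discrepancy via the deflation $\Delta$ in \eqref{eqn:new_radius} together with the 1-Lipschitz property of the \ac{ESDF}.

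Steps (i)--(ii) are essentially free. Since $\Ocal$ is rigid, $d(\cdot)$ is frame-invariant, so $d(p\inframe{B_{k+1}}) = d(p\inframe{B_k})$ where $p\inframe{B_k} := \Brotation{k+1}{k}\,p\inframe{B_{k+1}} + \Btranslation{k+1}{k}$ is the same physical point expressed via the true pose. The inductive hypothesis then gives $d(p\inframe{B_{k+1}}) \geq d_M^k(q^{\star})$ with $q^{\star} := \estrotation{B_k}{M}\,p\inframe{B_k} + \esttranslation{B_k}{M}$ (note that $q^{\star}$ uses the \emph{true} $p\inframe{B_k}$ but the \emph{estimated} pose of $B_k$ in $M$).

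Step (iii) is the crux. Expanding $(\estrotation{B_{k+1}}{M}, \esttranslation{B_{k+1}}{M})$ as the composition of $(\estrotation{B_k}{M}, \esttranslation{B_k}{M})$ with the inverse of the estimated inter-frame pose yields $\hat p\inframe{M} - q^{\star} = \estrotation{B_k}{M}(\hat p\inframe{B_k} - p\inframe{B_k})$, where $\hat p\inframe{B_k} := (\estBrotation{k}{k+1})^T (p\inframe{B_{k+1}} - \estBtranslation{k}{k+1})$ is the estimate of $p\inframe{B_k}$ computed from the estimated inter-frame pose. Rotations preserve the 2-norm, so $\|\hat p\inframe{M} - q^{\star}\| = \|\hat p\inframe{B_k} - p\inframe{B_k}\|$. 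Writing both sides explicitly, adding and subtracting an appropriate cross term, and applying the triangle inequality together with $\|A\| \leq \fronorm{A}$ and \Cref{assumption:vo_is_correct}, one obtains
\eqn{
\|\hat p\inframe{M} - q^{\star}\| \leq \epsilon_{R,k}\,\|p\inframe{B_{k+1}} - \estBtranslation{k}{k+1}\| + \epsilon_{t,k}.
}
A short check confirms that the $\hat p\inframe{B_{k+1}}$ inside $\Delta$ in \eqref{eqn:new_radius}, once $p\inframe{M}$ is taken to be $\hat p\inframe{M}$, collapses back to $p\inframe{B_{k+1}}$ (the inverse rototranslations cancel), so the right-hand side above is exactly $\Delta$.

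Step (iv) is immediate. The \ac{ESDF} $d_M^k$ is 1-Lipschitz (inherited from the min-over-obstacles definition), so $d_M^k(\hat p\inframe{M}) - d_M^k(q^{\star}) \leq \|\hat p\inframe{M} - q^{\star}\| \leq \Delta$. Combining with step (ii) and the update rule $d_M^{k+1}(\hat p\inframe{M}) = d_M^k(\hat p\inframe{M}) - \Delta$ yields the claim \eqref{eqn:conclusion}. The main obstacle is the bookkeeping in step (iii): keeping track of which quantities are true vs.\ estimated across three frames ($B_k$, $B_{k+1}$, $M$), and exploiting rotation-invariance of the 2-norm to collapse the error onto the two scalar bounds $\epsilon_{R,k}$ and $\epsilon_{t,k}$ exactly in the form anticipated by $\Delta$.
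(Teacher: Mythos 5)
Your frame bookkeeping in step (iii) is correct and matches the paper's own computation: the identity $\hat p\inframe{M} - q^\star = \estrotation{B_k}{M}\left(\hat p\inframe{B_k} - p\inframe{B_k}\right)$ holds, and the bound $\norm{\hat p\inframe{B_k} - p\inframe{B_k}} \leq \epsilon_{R,k}\norm{p\inframe{B_{k+1}} - \estBtranslation{k}{k+1}} + \epsilon_{t,k} = \Delta$ is exactly the paper's add-and-subtract plus triangle-inequality argument. The gap is in step (iv): you apply a 1-Lipschitz bound to $d_M^k$, but $d_M^k$ is not a min-over-obstacles SDF --- the theorem only assumes it satisfies \Cref{def:correct}, i.e., that it underestimates $d$ pointwise, which imposes no modulus of continuity at all. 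Moreover, the $d_M^k$ actually produced by the algorithm is generically \emph{not} 1-Lipschitz: the deflation \eqref{eqn:new_radius} subtracts a position-dependent term whose slope is $\epsilon_{R,k}$ (so the Lipschitz constant grows with each step), and the reset of the correction inside the camera \ac{FOV} introduces outright discontinuities at the \ac{FOV} boundary. Hence the inequality $d_M^k(\hat p\inframe{M}) - d_M^k(q^\star) \leq \norm{\hat p\inframe{M} - q^\star}$ is unjustified and can fail, and your chain breaks at that point.

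The fix is to move the Lipschitz step one level up, onto the true SDF $d$, which genuinely satisfies $\abs{d(x)-d(y)} \leq \norm{x-y}$ by its min-over-obstacles definition --- this is what the paper does. Concretely: instead of invoking the inductive hypothesis at the true point $p\inframe{B_k}$, invoke it at the \emph{estimated} point $\hat p\inframe{B_k} = \estBrotation{k+1}{k} p\inframe{B_{k+1}} + \estBtranslation{k+1}{k}$, whose image in $M$ under $\estrototranslation{B_k}{M}$ is exactly $\hat p\inframe{M}$. The chain then reads $d(p\inframe{B_{k+1}}) = d(p\inframe{B_k}) \geq d(\hat p\inframe{B_k}) - \Delta \geq d_M^k(\hat p\inframe{M}) - \Delta = d_M^{k+1}(\hat p\inframe{M})$, where the first inequality uses 1-Lipschitzness of $d$ together with your bound $\norm{p\inframe{B_k} - \hat p\inframe{B_k}} \leq \Delta$, and the second is the inductive hypothesis. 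Every other ingredient of your argument, including the verification that the deflation amount at the queried voxel collapses to exactly $\Delta$, survives unchanged.
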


\begin{proof}[Proof of \Cref{theorem:new_radius}]
A condensed proof is provided here. See Appendix~\ref{section:proof:new_radius} for more details. Consider any point $p\inframe{B_{k+1}} \in \R^3$. When expressed in frame $B_k$, we have
\eqnN{
&\norm{p\inframe{B_k} - \hat p\inframe{B_k}}\\
&= \norm{ \left( \Brotation{k+1}{k}p\inframe{B_{k+1}} + \Btranslation{k+1}{k} \right) - \left(
        \estBrotation{k+1}{k} p\inframe{B_{k+1}} + \estBtranslation{k+1}{k} \right)}\\
&= \Big \Vert \left( \Brotation{k+1}{k}- \estBrotation{k+1}{k} \right) \left( p\inframe{B_{k+1}} -  \estBtranslation{k}{k+1}   \right) \\
& \quad \quad + \Brotation{k+1}{k} \left( - \Btranslation{k}{k+1} +  \estBtranslation{k}{k+1} \right) \Big \Vert
}
by adding and subtracting $\Brotation{k+1}{k} \estBtranslation{k}{k+1}$ inside the norm.
Using the triangle inequality, 
\eqnN{
&\norm{p\inframe{B_k} - \hat p\inframe{B_k}}\\
&\leq \norm{ \Brotation{k+1}{k} - \estBrotation{k+1}{k}} \norm{p\inframe{B_{k+1}} -  \estBtranslation{k}{k+1} } + \norm{ \Btranslation{k}{k+1} -\estBtranslation{k}{k+1}  }\\
&\leq \epsilon_{R, k} \norm{p\inframe{B_{k+1}} -  \estBtranslation{k}{k+1} } + \epsilon_{t, k}
}

Thus, when $p\inframe{B_{k+1}}$ is expressed in $B_k$, it could correspond to any point in $\Pcal = \{ \tilde p \in \R^3 : \norm{ \tilde p - \hat p\inframe{B_{k}}} \leq \Delta \}$ where $\Delta = \epsilon_{R, k} \norm{p\inframe{B_{k+1}} -  \estBtranslation{k}{k+1} } + \epsilon_{t, k} $, and  $\hat p\inframe{B_k} = \estBrotation{k+1}{k} p\inframe{B_{k+1}} + \estBtranslation{k+1}{k}$.
Then, using properties of an SDF,
\eqnN{
d(p\inframe{B_{k+1}}) &\geq \min_{\tilde p \in \Pcal} d(\tilde p) \geq d(\hat p\inframe{B_k}) - \Delta \geq d_M^k(\hat p\inframe{M}) - \Delta
}
where $\hat p\inframe{M} = \estrotation{B_k}{M} \hat p\inframe{B_k} + \esttranslation{B_k}{M}$.  Therefore, we have
\eqnN{
d(p\inframe{B_{k+1}}) &\geq d_M^k(\hat p\inframe{M}) - \epsilon_{R, k} \norm{p\inframe{B_{k+1}} -  \estBtranslation{k}{k+1} } - \epsilon_{t,k}\\
&= d_M^{k+1}(\hat p\inframe{M}) 
}
completing the proof. 
\end{proof}

\Cref{fig:cesdf-diagram} demonstrates \Cref{theorem:new_radius}.  Consider a static obstacle 1~m away from the robot at the initial time. Suppose the robot moves by 0.1~m between frames $k=0$ and $1$. Suppose we need the \ac{ESDF} at a body-fixed point $p\inframe{B_1} = [0.05, 0.4]^T$.  The true ESDF is thus $0.1 + 0.4 = 0.5$~m.  However, suppose the estimated rototranslation was not exact, with a 5$^\circ$ rotation error and a 2~cm position error, as depicted on the right. 

In the approximate \ac{ESDF} approach~\eqref{eqn:incorrect_esdf}, we would express $p\inframe{B_1}$ in frame $M$, and compute the \ac{ESDF} to be 0.526~m, greater than the true ESDF. The approximate approach yields an unsafe estimate of the \ac{ESDF}. 

On the other hand, consider the proposed approach, i.e.,~\eqref{eqn:new_radius}. Given the error, we have $\epsilon_R = 0.123$ (\Cref{lemma:fronorm_angular_norm}) and $\epsilon_t = 0.02$. Using~\eqref{eqn:new_radius} the correction is $\Delta = 0.070$~m. Computing the corrected \ac{ESDF} yields $d_M^1(\hat p \inframe{M}) = 0.456$~m. Since $0.456 \leq 0.5$, this C-ESDF is an underestimate of the true ESDF.  This demonstrates safe behavior that satisfies our definition of a Certified ESDF.

\begin{figure}
    \centering
    \includegraphics{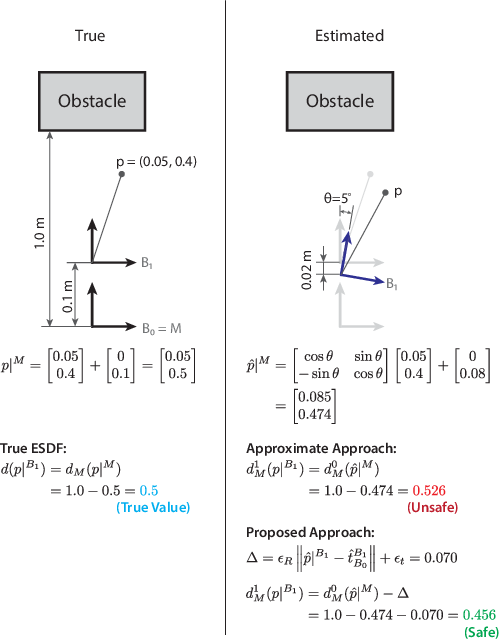}
\caption{Diagram demonstrating the \ac{CESDF} approach. Suppose the robot moves by 0.1~m in the y-axis, but the estimated rototranslation has some error. The approximate approach~\eqref{eqn:incorrect_esdf} is unsafe, estimating the ESDF of $p$ to be 0.526~m, greater than the true ESDF of 0.5~m. In the proposed approach~\eqref{eqn:new_radius}, we calculate a correction $\Delta$, and compute the C-ESDF to be 0.456~m, a safe underestimate. }
    \label{fig:cesdf-diagram}
\end{figure}

\begin{remark}
    The form of \eqref{eqn:new_radius} can be interpreted as follows. The update to the \ac{ESDF} is always a correction:
    \eqnN{
\underbrace{d_M^{k+1}(p)}_{\text{new ESDF}} = \underbrace{d_M^k(p)}_{\text{previous ESDF}}  \underbrace{- \epsilon_{R, k} \norm{ \hat p\inframe{B_{k+1}} - \estBtranslation{k}{k+1} }}_{\text{rotation correction}} \underbrace{- \epsilon_{t, k}}_{\text{translation correction}}
}

With a non-zero translation error the \ac{ESDF} must be decremented everywhere by $\epsilon_t$. With a non-zero rotation error, every point must also be decremented, but the decrement increases with the distance from the camera origin: a 2$^\circ$ rotation error leads to a $2\pi/180 \approx 0.03$~m error for a point 1~m away, but a 0.3~m error for a point 10~m away.
\end{remark}

\section{Algorithms and Implementation Details}
\label{section:algorithms}

Here we use the theoretical results of~\Cref{sec:certified-visual-odometry} and~\ref{sec:certified-esdf} to construct algorithms for \Cref{problem:overall}. Our implementations are open-sourced at \ifx\anonymized\undefined
\url{https://github.com/dasc-lab/certified-perception}.
\else
\emph{[redacted]}.
\fi

\subsection{C-VO}

The first step is to identify and track features in successive RGBD images. We use Good Features to Track \cite{shi1994good}, and the Lukas-Kanade optical flow algorithm \cite{lucas1981iterative}, since these represent well established baseline methods. By performing the feature detection and mapping in the 2D image space, we avoid the complexities of feature selection and matching in 3D pointclouds~\cite{tombari2013performance, choy2019fully}. In our experiments, we observe approximately 300 features matched per frame and a 1-5\% outlier rate, depending on motion blur, lighting conditions, and the richness of the observed scene. 

\label{section:graph_fraction}
At 300 features per frame, a complete graph $G$ would have $|E| = 44,850$ edges. This is computational bottleneck in~\eqref{eqn:wls_Q}. Empirically, we observe that only a small fraction $f \in (0, 1)$ of edges are needed to recover the true solution of the rotation estimation problem~\Cref{fig:vo-time-accuracy}. The figure shows that as $f$ reduces, the computation time decreases, but the accuracy is largely unaffected. We use $f=5$\%, as a compromise between runtime, accuracy, and robustness.

\begin{figure}
    \centering
    \includegraphics[width=0.95\linewidth]{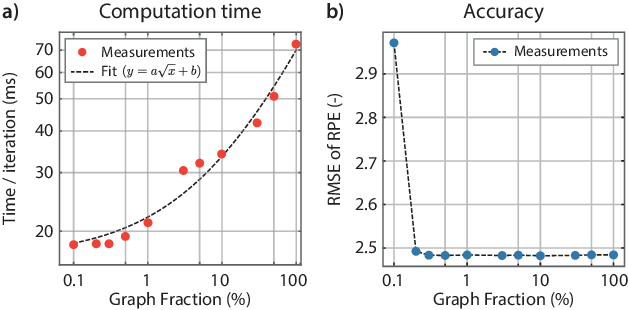}
    \caption{Effect of graph fraction on accuracy and runtime of Certified Visual Odometry (CVO). (a)~Log-Log plot of the computation time. (b) Log-Linear plot of accuracy. Both graphs were produced by running the algorithm on a recorderd dataset using the Realsense D455 camera, and ground-truth provided by VICON.}
    \label{fig:vo-time-accuracy}
\end{figure}

\begin{algorithm}
  %\SetAlgoLined
  \label{alg:vo}
  \DontPrintSemicolon
  \caption{Certified Visual Odometry}
  \SetKw{KwParams}{Parameters:}
  \SetKwProg{When}{When}{}{end}
  \SetKwProg{Fn}{function}{}{end}
  \When{received $(k+1)$-th RGBD image}
  {
  Using optical flow, identify corresponding features in RGB frames $k$ and $k+1$.\;
  Project features to 3D using the depth images. \;
  Construct graph $G$ connecting pairs of features.\;
  Determine $\hat R_{k}^{k+1}, \hat t_{k}^{k+1}$ using \eqref{eqn:wls_rot}, \eqref{eqn:tls_trans} and \ac{GNC}. \;
  Determine $\epsilon_{R, k}, \epsilon_{t, k}$ using~\eqref{eqn:rotation_bound_small}, \eqref{eqn:translation_bound}. \;
  Publish  $(\hat R_{k}^{k+1}, \hat t_{k}^{k+1}, \epsilon_{R, k}, \epsilon_{t, k})$.\;
  }
\end{algorithm}

\subsection{C-ESDF}

Our implementation of the \ac{CESDF} builds on~\cite{nvblox}. We describe our modifications, assuming familiarity with~\cite{oleynikova2017voxblox, nvblox}. 

As in~\cite{nvblox}, we maintain two \ac{SDF}s, the \ac{TSDF} and the \ac{ESDF}. The \ac{TSDF} only contains values near the obstacle geometry, and is used to determine the zero level-set of the obstacles. When a new depth images are received, we use raycasting to update the \ac{TSDF} of voxels within the \ac{FOV}. From the updated cells, we propagate waves to construct the \ac{ESDF}. 

Our modification is to add a deflation step, based on \Cref{theorem:new_radius}. We keep track of a \texttt{correction} in each voxel, the cumulative sum of decrements. When a voxel lies within the \ac{FOV}, the correction is reset to zero, allowing new regions to be added to the free set. When publishing the \ac{CESDF}, we subtract the correction from estimated \ac{ESDF}. Since the decrement at each voxel is independent of every other voxel, the decrement step can be parallelized efficiently on the GPU. 

\begin{algorithm}
  %\SetAlgoLined
  \label{alg:cert-esdf}
  \DontPrintSemicolon
  \caption{Certified Mapping}
  \SetKw{KwParams}{Parameters:}
  \SetKwProg{When}{When}{}{end}
  \SetKwProg{Every}{Every}{}{end}
  \SetKwProg{Fn}{function}{}{end}
  \When{received $(k+1)$-th RGB-D image}
  {
  Determine $\estBrototranslation{k}{k+1}$ and $\epsilon_{R,k}, \epsilon_{t, k}$ using Algorithm~\ref{alg:vo}. \;
  Update the \ac{TSDF} within camera's FOV.\;
  Update \texttt{correction} of all voxels using Thm.~\ref{theorem:new_radius}.\;
  For all voxels in FOV, reset \texttt{correction} to 0.\;
  }
  \Every{$T$ seconds}
  {
  Propagate waves from all surface voxels to update the \ac{CESDF}.\; 
  }
\end{algorithm}

\section{Experimental Evaluation}
\label{sec:experiments}

Here we report our experiments. As part of developing these libraries, extensive tests were performed on synthetic data and test datasets, but omitted in the interest of space. Only results from hardware experiments are reported. 

\subsection{C-VO}

\begin{figure*}
    \centering
    \includegraphics[width=0.9\linewidth]{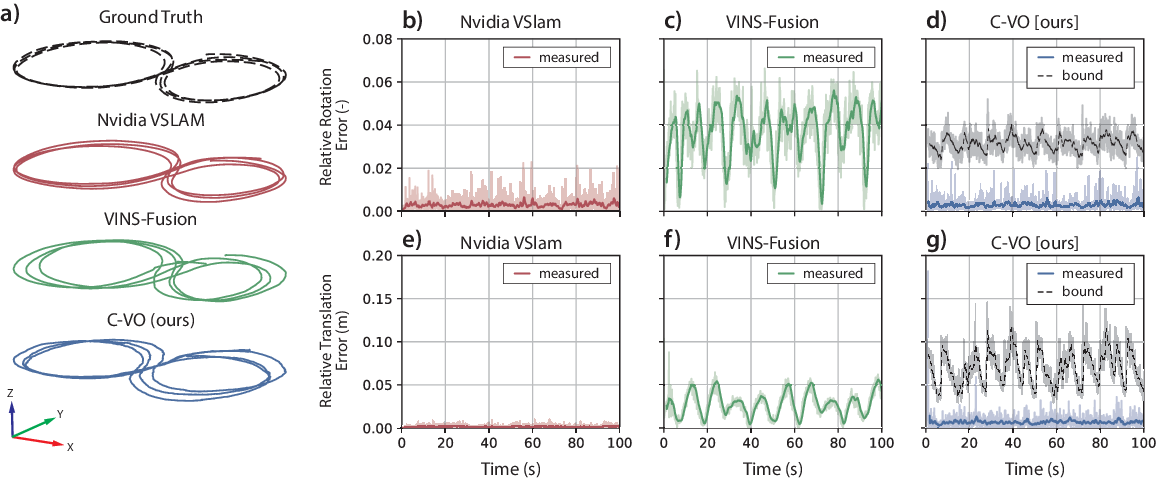}
    \caption{Comparison of the Certified Visual Odometry (C-VO) algorithm with benchmark algorithms. We compare \ac{CVO} with Nvidia VSlam~\cite{vslam} and VINS-Fusion~\cite{yu2021vins}. To compute the error, we also recorded the ground-truth trajectory using a VICON motion capture system. To enable a fair comparison, each method was run in Visual Odometry only mode, i.e., without loop-closures or IMU measurements. (a) The reconstructed trajectory from each method. (b-g) The relative rotation error and relative translation error for each method. Only our method provides error bounds.}
    \label{fig:cvo-error}
\end{figure*}

\begin{table*}
    \centering
    \caption{Comparison of \ac{CVO} with state-of-the-art algorithms. }
    \begin{tabular}{lccc}
    \toprule
        Method & RMS Rotation Error (-) & RMS Translation Error (cm) &  Time / iteration (ms) \\
        \midrule
      V-SLAM~\cite{vslam} & 0.00376 & 0.242 &  1.40\\
        VINS-Fusion \cite{yu2021vins} & 0.03941 & 3.102 & 39.24\\
         C-VO [ours] & 0.00380 & 0.882 & 14.24\\
         \midrule
         C-VO [ours] Error Bound & 0.03177 & 7.239 & (included above)\\
         \bottomrule
    \end{tabular}
    \label{table:cvo-error}
\end{table*}

\begin{table}
    
\end{table}

\Cref{fig:cvo-error} and \Cref{table:cvo-error} compare the performance of the \ac{CVO} method to two state-of-the-art methods for Visual Odometry, Nvidia VSLAM~\cite{vslam} and VINS-Fusion~\cite{yu2021vins}. Note, for a fair comparison, we turn off the IMU and loop-closure components of each algorithm. We use VICON for ground-truth. The following metrics measure the relative rototranslation error,\footnote{These definitions differ to those in~\cite{grupp2017evo}, since these correspond to $\epsilon_R, \epsilon_t$.}
\eqnN{
RRE_k &= \fronorm{ \estBrotation{k}{k+1} - \Brotation{k}{k+1}},\\
RTE_k &= \norm{ \hat t_{B_k}^{B_{k+1}} - t_{B_k}^{B_{k+1}}},
}
where the quantities with and without $\hat{(\cdot)}$ denote the estimates and the ground truths respectively. 

The results show that our \ac{CVO} algorithm produces an rotation error that is on-par with VSLAM, but accumulates 6mm of additional translation error per frame (in terms of the RMSE). Our algorithm is more accurate than VINS-Fusion, with an order of magnitude lower rotation error, and a third of the translation error (\Cref{table:cvo-error}). \Cref{table:cvo-error} also shows that VSLAM implementation is 10x faster than ours, and 30x faster than VINS-Fusion, likely due to specialized GPU code. This makes the entire (closed-source) VSLAM odometry algorithm faster than the OpenCV feature detection step alone. Improving our implementation should allow us to track more features and therefore improve performance.

The primary benefit of our algorithm is that it is able to produce error bounds, plotted in Figs.~\ref{fig:cvo-error}d, \ref{fig:cvo-error}g. The error bound is always greater than the measured error. The error bound is (in terms of the RMSE) about an order of magnitude greater than the measured error. This is to be expected, since our error bounds are calculated assuming worst case disturbances, and extensive use of the triangle inequality.

\subsection{C-ESDF}

\begin{figure*}[t]
    \centering
    \includegraphics[width=0.9\linewidth]{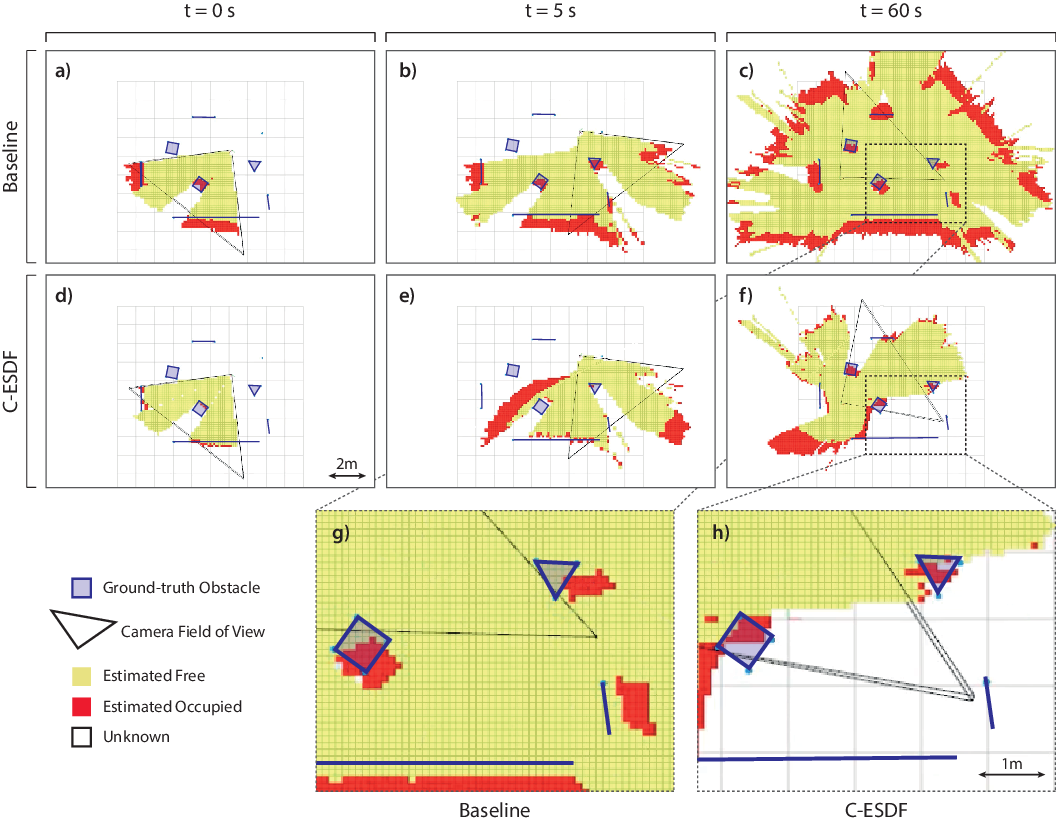}
    \caption{Certified Mapping. (a-c) Snapshots of the \ac{ESDF} using the baseline algorithm. (d-f) Snapshots using the proposed \ac{CESDF} algorithm. (g, h) Insets showing (c, f) magnified. Each figure shows a slice at $z=1.0$~m of the constructed 3D \ac{ESDF}. The figures show that while the baseline \ac{ESDF} produces a map of entire lab space, while the \ac{CESDF} produces a local map. However, due to the accumulated error, the baseline \ac{ESDF} misclassified some regions to be free of obstacles, while the \ac{CESDF} does not have this issue.}
    \label{fig:cert-mapping}
\end{figure*}

\Cref{fig:cert-mapping} shows the performance of the Certified \ac{ESDF} algorithm (bottom row), compared to the  baseline implementation from Nvidia~\cite{nvblox} (top row). The results are easier to understand in the supplementary video, at \ifx\anonymized\undefined
\url{https://github.com/dasc-lab/certified-perception}.
\else
[redacted].
\fi

In all figures we show the scene in the mapping frame. Due to \ac{VO} drift the obstacles are not stationary in the mapping frame. Using VICON we plot the ground-truth location of the obstacles relative to the camera. The objective is to build a map where for all voxels inside the blue shapes, the map is either unknown (white) or occupied (red), but not free (yellow).

Initially the maps generated are the same (Figs.~\ref{fig:cert-mapping}a, \ref{fig:cert-mapping}d). Between $0$ and $5$~s, the camera yaws by 90$^\circ$, and starts seeing a new region (Figs. \ref{fig:cert-mapping}b, \ref{fig:cert-mapping}e).  In the baseline (Fig. \ref{fig:cert-mapping}b) the region outside the camera's \ac{FOV} is not updated. In the \ac{CESDF}, since the left region is unobserved, the deflation step decrements the ESDF, and some cells are marked red (Fig. \ref{fig:cert-mapping}e). 

After $60$~s, a large portion of the lab has now been mapped (Figs. \ref{fig:cert-mapping}c, \ref{fig:cert-mapping}f). The accumulated \ac{VO} drift leads to errors in the baseline map. In particular, consider the magnified insets (Figs. \ref{fig:cert-mapping}g, \ref{fig:cert-mapping}h). In Fig. \ref{fig:cert-mapping}g we see some yellow cells that should be red. In contrast, in Fig. \ref{fig:cert-mapping}h, the obstacles are marked correctly.\footnote{Note, there are some cells in the \ac{CESDF} that are still incorrect, due to numerical issues and the resolution used.} Notice the forgotten regions are not based on either the distance of the point to the origin of the camera, or the time that elapsed since the point was in the camera's \ac{FOV}. These are two common heuristics used, but the \ac{CVO} algorithm deflates the \ac{ESDF} based on the actual odometry drift. 

Overall, we see that with the \ac{CESDF} algorithm, the region of the map that is classified as either free or obstacle is smaller (i.e., greater parts of the map are unknown). However, the regions that are marked as free are indeed free relative to the ground truth obstacle geometry. Furthermore, with more accurate \ac{CVO}, the deflation rate will be smaller.

\subsection{Timing Analysis}
\label{section:timing}

\begin{table}[]
    \centering
    \caption{Timing Breakdown. All steps except for the ESDF wave propagation happen at each new camera frame, i.e., 30~Hz. The wave propagation occurs at 5~Hz.}
    \begin{tabular}{llc}
    \toprule
    Module & Step & Mean Time (ms) \\
    \midrule
    C-VO & Feature Detection & 2.804 \\
	& Optical Flow & 0.935\\
	& Estimate Rototranslation & 1.463\\
	& Compute Error Bounds & 2.365\\
C-ESDF &	Integrate Depth into TSDF & 0.674\\
	& Deflate ESDF & 0.282\\
 & ESDF Wave Propagation* & 5.491\\
 \bottomrule
    \end{tabular}
    \label{tab:timing}
\end{table}

\Cref{tab:timing} breaks down the computation time for major steps of \ac{CVO} and \ac{CESDF}. The cameras are operating at 30FPS, i.e., all processing must occur within 33~ms. We see that the \ac{CVO} algorithm takes about 7~ms, of which half is the feature detection and optical flow. The \ac{CESDF} algorithm is fast, taking about 1~ms to integrate a new depth image and to deflate the ESDF. By distributing the deflation step over the GPU, the deflation step can be performed very efficiently. The proposed methods can run in realtime.

All reported results are from experiments run on a Ryzen7 5800h 16GB, with a 3050Ti.  We have performed similar experiments on the Xavier NX 16 GB. On the Xavier, the \ac{CVO} module takes approximately 18~ms, and therefore can still run in realtime. The computation time depends heavily on the number of features observed in each frame, and therefore the variance of the compute time can be significant. 

\subsection{Limitations and Future Work}

There are a few directions in which this work can be extended. (A)~Improvements in feature detection and tracking in the RGB images are required for the \ac{CVO} algorithm to be reliable when executing aggressive maneuvers. (B)~Incorporating information from an IMU or keeping track of multiple sets of pointclouds could significantly improve the robustness of the \ac{CVO} algorithm. (C)~A semantic or geometric segmentation of the received images could be used to mask dynamic obstacles from the environment. It could also be used in the \ac{CESDF}, as the deflation correction around dynamic obstalces could be increased based on the maximum speed of the obstacles. (D)~The \ac{CVO} error bounds depend explicitly on the singular values of $A$ in~\eqref{eqn:rotation_bound}. Using the \ac{ESDF}, one could predict the regions of the map that lack features and avoid them in the path planner. This can help keep the \ac{CVO} error small. 

\section{Conclusion}

This paper has taken an  initial step towards building certified perception algorithms applicable to safety critical planning and control. Our goal was to use the visual information from an RGBD camera, construct a pose estimate, and build a map of the obstacle geometry. The generated map needs to be correct in a body-fixed frame, i.e., to ensure that regions marked as free are indeed free relative to the robot. 

To achieve this, we first developed a Certified Visual Odometry (C-VO) algorithm. This casts visual odometry as a pointcloud registration problem. We propose a robust truncated least squares algorithm to solve for the rotation and the translation between successive camera frames, and use the geometric properties to derive an error bound on rototranslation estimate. 

The second step was to devise a Certified Mapping algorithm. We used  the \ac{CVO} error bound to deflate the signed distance field representing the world. By choosing to deflate safe regions (instead of inflating known obstacles) we allow the environment to be continuously explored, and allow new information from sensors to be assimilated into the map. The decrement in the signed distance field at each point of the map is, again, derived using on geometric properties.

Importantly, the \ac{CVO} and \ac{CESDF} algorithms integrate naturally. The error bound on \ac{CVO} takes the form of a norm ball, and the same norm ball is used to deflate the \ac{ESDF}. 

Finally, we have experimental demonstrations of both the \ac{CVO} and \ac{CESDF} algorithms indicate their ability to run in realtime, although they can still be made more efficient.

\appendices

{
\setstretch{0.95}
\bibliographystyle{IEEEtran}
\bibliography{biblio, IEEEabrv}
}

\onecolumn

\section{Notation and Additional Symbols}
\begin{table}[h]
    \centering
    \caption{}
    \begin{tabular}{c p{4.5in}}
    \toprule
         Symbol & Description \\
         \midrule
         $I$ & Inertial Frame\\
         $M$ & Mapping Frame\\
         $B_k$ & Body Frame at timestep $k$\\
         $p\inframe{F}$ & Point $p$ expressed in frame $F$\\
         $\Ocal \subset \R^3$ & Obstacle set\\
         $d(p)$ & SDF (frame-independent)\\
         $d_M(p\inframe{M})$ & SDF expressed in frame $M$\\
         $d_M^k(p\inframe{M})$ & SDF expressed in frame $M$ at timestep $k$\\
         $\epsilon_{R, k}$ & Rotation error bound between frames $B_{k}$ and $B_{k+1}$\\
         $\epsilon_{t, k}$ & Translation error bound between frames $B_k$ and $B_{k+1}$\\
         \bottomrule
    \end{tabular}
    
    \label{tab:notation}
\end{table}

\section{Proofs}

\subsection{Proof of~\Cref{lemma:wls_rot}}

\label{sec:proof:wls_rot}

\begin{proof}
For ease  of notation let $k = ij$ be used to index edge $(i,j)$ of the graph $G$. Reformulating \eqref{eqn:wls_rot} in terms of quaternions, we have
\eqnN{
\argmin{q \in \S^3} \sum_{k=1}^{|E|} w_k \norm{ \overline b_k - q \circ \overline a_k \circ  q^{-1}}^2 \label{eqn:wls_quat}
}
where $\overline a_k = \bmat{a_k^T & 0}^T$, $\overline b_k = \bmat{b_k^T & 0}^T$. Since
\eqnN{
&w_k \norm{ \overline b_k - q \circ \overline a_k \circ  q^{-1}}^2 \\
&\quad = w_k \norm { \overline b_k - \Omega_2^T(q) \Omega_1(q) \overline a_k}^2 \\
&\quad = w_k ( (\overline b_k - \Omega_2^T(q) \Omega_1(q) \overline a_k)^T (\overline b_k - \Omega_2^T(q) \Omega_1(q) \overline a_k) )\\
&\quad = w_k ( \overline b_k^T \overline b_k - \overline b_k^T \Omega_2^T(q) \Omega_1(q) \overline a_k - \overline a_k^T \Omega_1^T(q) \Omega_2(q) \overline b_k + \overline a_k^T \overline a_k)\\
&\quad = w_k ( \norm{\overline b_k}^2 - \overline b_k^T \Omega_2^T(q) \Omega_1(q) \overline a_k - \overline a_k^T \Omega_1^T(q) \Omega_2(q) \overline b_k + \norm{\overline a_k}^2)\\
&\quad = w_k ( \norm{\overline b_k}^2 - q^T \Omega_1^T(\overline b_k) \Omega_2(\overline a_k) q - q^T \Omega_2^T(\overline a_k) \Omega_1(\overline b_k) q + \norm{\overline a_k}^2) \\
&\quad = w_k ( \norm{a_k}^2 + \norm{b_k}^2) - q^T ( w_k ( \Omega_1^T(\overline b_k) \Omega_2(\overline a_k) + \Omega_2^T(\overline a_k) \Omega_1(\overline b_k) ) ) q
}
Since $q$ does not appear in the first term, it can be dropped from the optimization problem, and~\eqref{eqn:wls_quat} is equivalent to
\eqnN{
\argmin{q \in \S^3} \ q^T Q q
}
where $Q \in \R^{4\times 4}$ is given in~\eqref{eqn:wls_Q}.

Finally, by interpreting the quaternion $q\in \S^3$ as a unit vector $q \in \R^4$, we have
\eqnN{
\argmin{q \in \S^3} \ q^T Q q = \argmin{q \in \R^4}  \ q^T Q q \ \text{s.t.} \norm{q} = 1
}
which is the canonical eigenvector problem.  
\end{proof}

\subsection{Proof of~\Cref{lemma:rotation_bound}}
\label{section:proof:rotation_bound}
\begin{proof}
    The proof is based on \cite[Thm. 37]{yang2020teaser}, but is repeated here in the notation of this paper. Note, we do not consider scale errors or adversarial outliers. Given any $(i, j) \in E$, 
    \eqnN{
    b_{ij} = R a_{ij} + \epsilon_{ij}
    }
    where $\norm{\epsilon_{ij}} \leq \norm{\epsilon_i} + \norm{\epsilon_j} \leq \delta_{ij} = \delta_i + \delta_j$. 

    If we define 
    \eqnN{
    \phi_{ij} = b_{ij} - \hat R a_{ij}
    }
    then, $b_{ij} = \hat R a_{ij} + \phi_{ij}$. Equating the two expressions for $b_{ij}$, 
    \eqnN{
    (R - \hat R) a_{ij} &= \phi_{ij} - \epsilon_{ij}\\
    \therefore \norm{(R - \hat R) \frac{a_{ij}}{\norm{a_{ij}}}} &= \frac{\norm{\phi_{ij} - \epsilon_{ij}}}{\norm{a_{ij}}} \leq \frac{\norm{\phi_{ij}} + \delta_{ij}}{\norm{a_{ij}}}
    }
    Define $A \in \R^{3 \times |E|}$ as the matrix obtained by horizontally stacking each normalized $a_{ij}$:
    \eqnN{
    A = \bmat{ \cdots & \frac{a_{ij}}{\norm{a_{ij}}} & \cdots }
    }
    Then, using the properties of the Frobenius norm, 
    \eqnN{
    \fronorm{(R - \hat R) A}^2 \leq \sum_{(i,j) \in E} z_{ij}^2
    }
    where $z_{ij} = (\norm{\phi_{ij}} + \delta_{ij})/\norm{a_{ij}}$.
    
    The objective is to lower-bound $\fronorm{R - \hat R}$.  Consider the matrix 
    \eqnN{
    Z &= (R - \hat R)^T (R - \hat R)\\
      &= 2I - (S^T + S)
    }
    where $S = \hat R^T R \in \SO(3)$. Since $S \in \SO(3)$, let the axis-angle representation of $S$ be $(s, \theta)$, where $s \in \R^3$ is a unit vector, and $\theta \in \R$ is the angular distance between $R, \hat R$. Thus, the axis-angle representation of $S^T$ is $(s, -\theta)$. Therefore, using the Rodrigues formula, 
    \eqnN{
    S &= \cos(\theta) I + \sin(\theta)[s]_\times + (1- \cos \theta) s s^T\\
    S^T &=  \cos(\theta) I - \sin(\theta)[s]_\times + (1- \cos \theta) s s^T
    }
    where $[s]_\times \in \R^{3\times 3}$ is the skew-symmetric matrix associated with $s \in \R^3$. Inserting these equations into $Z$, we have
    \eqnN{
    Z &= 2(1- \cos(\theta)) I - 2 (1 - \cos\theta) s s^T
    }
    Next, we investigate the eigenvalue decomposition of $Z$. Notice $s$ is an eigenvector of $Z$, associated with an eigenvalue of $0$:
    \eqnN{
    Zs &= 2(1- \cos(\theta)) s - 2 (1 - \cos\theta) s s^T s\\
    &= 2 (1- \cos(\theta)) s - 2 (1- \cos\theta) s =  0 
    }
    since $s$ is a unit vector, i.e., $s^T s = 1$. Now consider any two vectors $w_1, w_2 \in \R^3$ perpendicular to $s$ and each other. For either $w \in \{w_1, w_2\}$, 
    \eqnN{
    Zw &= 2(1- \cos(\theta)) w - 2 (1 - \cos\theta) s s^T w\\
    &= 2(1- \cos(\theta)) w 
    }
    since $s^T w = 0$. Thus, the eigenvalues of $Z$ are $\{\lambda, \lambda, 0\}$ where $\lambda = 2 (1-\cos \theta)$. Since $Z^{1/2} = (R - \hat R)$,  the singular values of $(R - \hat R)$ are $\{ \sqrt{\lambda}, \sqrt{\lambda}, 0\}$. 

    Now let the \ac{SVD} of $(R - \hat R)$ be $U \Sigma V^T$. Then, 
    \eqnN{
    &\fronorm{( R - \hat R)^T A}^2 = \fronorm{U \Sigma V^T A }^2 = \fronorm{\Sigma V^T A }^2\\
    =& \fronorm{\bmat{\sqrt{\lambda} \\ & \sqrt{\lambda} & \\ & & 0} \bmat{ (V^T A)_1 \\ (V^T A)_2 \\ (V^T A)_3 } }^2\\
    =& \lambda \left( \norm{(V^TA)_1}^2 + \norm{(V^TA)_2}^2 \right)\\
    =& \lambda \left( \fronorm{V^TA}^2 - \norm{(V^TA)_3}^2 \right)
    }
    where the notation $(V^TA)_1$ denotes the first row of $V^TA$. Recall singular values of a matrix $X \in \R^{3 \times 3}$ are listed in non-increasing order: $\sigma_1(X) \geq \sigma_2(X) \geq \sigma_3(X) \geq 0$. 
    Since $V$ is a unitary matrix, and  $\fronorm{XY}^2 \leq \sigma_{1}^2(Y) \fronorm{X}^2$ for any two matrices $X, Y$, we have
    \eqnN{
    \norm{(V^TA)_3}^2 &= \fronorm{\bmat{ 0  \\ &0 \\ &&1} V^T A}^2 \\
    &\leq \sigma_{1}^2(A) \fronorm{\bmat{0  \\ &0 \\ &&1} V^T}^2\\
    \therefore \norm{(V^TA)_3}^2 & \leq \sigma_{1}^2(A)
    }
    Now using the property $\fronorm{V^TA}^2 = \fronorm{A}^2 = \sigma_1^2(A) + \sigma_2^2(A) + \sigma_3^2(A)$, we have
    \eqnN{
    \fronorm{( R - \hat R)^T A}^2 &= \lambda\left ( \norm{V^TA}_F^2 - \norm{(V^TA)_3}^2 \right)\\
    & \geq \lambda\left( \sigma_1^2(A) + \sigma_2^2(A) + \sigma_3^2(A) - \sigma_{1}^2(A) \right)\\
    & \geq \lambda\left( \sigma_2^2(A) + \sigma_3^2(A) \right)
    }
    Combining the upper and lower bounds on $\norm{(R- \hat R)^T A}_F$, 
    \eqnN{
    \lambda( \sigma_2^2(A) + \sigma_3^2(A) ) &\leq \norm{( R - \hat R)^T A}_F^2 \leq \sum_{(i,j) \in E} z_{ij}^2\\
\therefore    \lambda &\leq \frac{1}{\sigma_2^2(A) + \sigma_3^2(A)} \sum_{(i,j) \in E} z_{ij}^2
    }
    This leads to the Frobenius bound:
    \eqnN{
    \norm{R - \hat R}_F^2 &= \operatorname{tr}((R - \hat R)(R - \hat R)^T)\\
    &= \operatorname{tr}(2 I - R \hat R^T - \hat R R^T)\\
    &= 6 - 2 \operatorname{tr}(R \hat R^T)\\
    &= 6 - 2 (1 + 2 \cos \theta)\\
    &= 2 \lambda\\
    &\leq \frac{2}{\sigma_2^2(A) + \sigma_3^2(A)} \sum_{(i,j) \in E} z_{ij}^2
    }
    since $\lambda = 2 (1 - \cos \theta)$. 
\end{proof}

\begin{lemma}
\label{lemma:fronorm_angular_norm}
    Given $R_1, R_2 \in \SO(3)$, the error between the rotations can be  expressed equivalently as a Frobenius norm or as an angular error:
    \eqn{
    \cos \theta = 1 - \frac{\norm{R_1 - R_2}_F^2}{4}.
    }
\end{lemma}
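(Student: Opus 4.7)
The plan is to reduce the Frobenius norm to a trace expression, then invoke the standard axis-angle formula for the trace of a rotation. This mirrors the calculation already carried out at the end of the proof of \Cref{lemma:rotation_bound}, so the argument is essentially self-contained from what has been established.

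First I would expand the squared Frobenius norm via the trace:
\eqnN{
\norm{R_1 - R_2}_F^2 = \tr{(R_1 - R_2)(R_1 - R_2)^T} = \tr{R_1 R_1^T - R_1 R_2^T - R_2 R_1^T + R_2 R_2^T}.
}
Since $R_1, R_2 \in \SO(3)$ are orthogonal, $R_1 R_1^T = R_2 R_2^T = I$, so the first and last terms together contribute $\tr{2I} = 6$. Using $\tr{X} = \tr{X^T}$ on the cross terms leaves
\eqnN{
\norm{R_1 - R_2}_F^2 = 6 - 2\,\tr{R_1 R_2^T}.
}

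Next I would identify $\theta$ with the axis-angle rotation angle of $S := R_1 R_2^T \in \SO(3)$, consistent with the definition used elsewhere in the paper (compare the appearance of $S = \hat R^T R$ in the proof of \Cref{lemma:rotation_bound}). Writing $S$ via Rodrigues' formula with unit axis $s$ and angle $\theta$, exactly as in that earlier proof, gives $\tr{S} = 1 + 2\cos\theta$, since $\tr{I} = 3$, $\tr{[s]_\times} = 0$, and $\tr{s s^T} = \norm{s}^2 = 1$, so $\tr{\cos\theta\, I + \sin\theta[s]_\times + (1-\cos\theta)ss^T} = 3\cos\theta + (1-\cos\theta) = 1 + 2\cos\theta$.

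Combining the two displayed identities yields
\eqnN{
\norm{R_1 - R_2}_F^2 = 6 - 2(1 + 2\cos\theta) = 4 - 4\cos\theta,
}
which rearranges to the claimed equality $\cos\theta = 1 - \norm{R_1 - R_2}_F^2/4$. There is no real obstacle here; the only point that requires care is fixing the convention that $\theta$ denotes the axis-angle angle of the relative rotation $R_1 R_2^T$, since the statement of the lemma alone does not specify what $\theta$ refers to, and it must match the convention used in \Cref{lemma:rotation_bound} for \Cref{corrollary:rotation_bound} to be applied consistently.
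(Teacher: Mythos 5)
Your proposal is correct and follows essentially the same route as the paper's proof: expand the squared Frobenius norm via the trace to get $6 - 2\operatorname{tr}(R_1 R_2^T)$, then use $\operatorname{tr}(R_1 R_2^T) = 1 + 2\cos\theta$ for the relative rotation. The only cosmetic difference is that you derive the trace identity from Rodrigues' formula while the paper cites it from a reference.
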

\begin{proof}
\eqn{
    \norm{R_1 - R_2}_F^2 &= \tr{(R_1 - R_2)(R_1 - R_2)^T}\\
    &= \tr{2 I - R_1 R_2^T - R_2 R_1^T}\\
    &= 6 - 2 \tr{R_1 R_2^T}
    }
    Now defining the relative rotation from $R_1$ to $R_2$ as $\Delta = R_1 R_2^T \in \SO(3)$, the minimum angle corresponding to the rotation $\Delta$ is $\theta \geq 0$ about some axis $u \in \R^3$. Then, $\tr{\Delta} = 1 + 2 \cos \theta$ according to~\cite[Appendix A]{huynh2009metrics}. Therefore, 
    \eqn{
    \norm{R_1 - R_2}_F^2 &= 6 - 2 (1 + 2 \cos \theta)\\
    &= 4 (1- \cos \theta)
    }
\end{proof}

\subsection{Proof of~\Cref{lemma:translation_bound}}
\label{sec:proof:translation_bound}
\begin{proof}
    Consider any $i \in \Ical$. Then, 
    \eqnN{
    b_i = R a_i + t + \epsilon_i\\
    b_i = \hat R a_i + \hat t + \phi_i.
    }
    Therefore, equating these two expressions, 
    \eqnN{
    t - \hat t &= (\hat R  - R) a_i + \phi_i - \epsilon_i\\
    \therefore \norm{t - \hat t} &\leq \norm{(\hat R  - R) a_i } + \norm{\phi_i} + \norm{\epsilon_i}\\
    & \leq \norm{\hat R - R}_2 \norm{a_i} + \norm{\phi_i} + \delta_i\\
    & \leq \fronorm{\hat R - R} \norm{a_i} + \norm{\phi_i} + \delta_i\\
    & \leq \epsilon_R \norm{a_i} + \norm{\phi_i} + \delta_i
    }
    Since this holds true for all $i \in \Ical$, we take 
 the minimum of the RHS over $i$, yielding the expression in the theorem statement. 
\end{proof}

\begin{lemma}
\label{lemma:inverse_rototranslation}
For any rototranslation $\rototranslation{A}{B} \in \SE(3)$, 
    \eqnN{
    \translation{B}{A} = - \rotation{B}{A} \translation{A}{B}.
    }
\end{lemma}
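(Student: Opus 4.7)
The plan is to use the defining property of an $\SE(3)$ element together with the fact that $\rotation{B}{A} = (\rotation{A}{B})^{-1} = (\rotation{A}{B})^T$. The idea is that applying the change-of-frame formula twice (once with the roles of $A$ and $B$ swapped) yields two equations that must be consistent for every point $p \in \R^3$, and evaluating at one well-chosen point pins down the relationship between $\translation{A}{B}$ and $\translation{B}{A}$.

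Specifically, I would recall that the defining property of $\rototranslation{A}{B}$ is the identity $p\inframe{B} = \rotation{A}{B} p\inframe{A} + \translation{A}{B}$ for every $p \in \R^3$, and symmetrically $p\inframe{A} = \rotation{B}{A} p\inframe{B} + \translation{B}{A}$. Next, I would evaluate at the origin of frame $B$, so that $p\inframe{B} = 0$. The first identity then gives $0 = \rotation{A}{B} p\inframe{A} + \translation{A}{B}$, hence $p\inframe{A} = -(\rotation{A}{B})^{-1} \translation{A}{B} = -\rotation{B}{A} \translation{A}{B}$. But by the very definition of the translation component, $p\inframe{A}$ for $p$ the origin of $B$ is exactly $\translation{B}{A}$, yielding $\translation{B}{A} = -\rotation{B}{A} \translation{A}{B}$.

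There is no real obstacle here: the argument is simply a matter of unpacking the notation, and the only fact needed beyond the definitions in \Cref{sec:notation} is the standard identity $(\rotation{A}{B})^{-1} = \rotation{B}{A} = (\rotation{A}{B})^T$ coming from $\SO(3)$. As a quick sanity check I would offer an alternative derivation: substitute $p\inframe{A} = \rotation{B}{A} p\inframe{B} + \translation{B}{A}$ into $p\inframe{B} = \rotation{A}{B} p\inframe{A} + \translation{A}{B}$ to obtain $p\inframe{B} = p\inframe{B} + \rotation{A}{B} \translation{B}{A} + \translation{A}{B}$, which forces $\rotation{A}{B} \translation{B}{A} = -\translation{A}{B}$, and multiplying on the left by $\rotation{B}{A}$ gives the same conclusion.
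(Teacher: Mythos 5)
Your proof is correct, and your ``sanity check'' derivation (substituting $p\inframe{A} = \rotation{B}{A} p\inframe{B} + \translation{B}{A}$ into $p\inframe{B} = \rotation{A}{B} p\inframe{A} + \translation{A}{B}$ and cancelling) is essentially verbatim the paper's own proof; your primary argument of evaluating at the origin of frame $B$ is just a special case of the same computation. Both are valid, and no further comment is needed.
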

\begin{proof}
Consider some vector $v\inframe{B}$. Then
\eqnN{
v\inframe{A} = \rotation{B}{A} v\inframe{B} + \translation{B}{A}\\
\therefore \translation{B}{A} = v\inframe{A} - \rotation{B}{A} v\inframe{B}
}
Furthermore, $v\inframe{B} = \rotation{A}{B} v\inframe{A} + \translation{A}{B}$. Therefore, 
\eqnN{
\translation{B}{A} &= v\inframe{A} - \rotation{B}{A} ( \rotation{A}{B} v\inframe{A} + \translation{A}{B} )\\
 &= v\inframe{A} -  \rotation{B}{A} \rotation{A}{B} v\inframe{A} - \rotation{B}{A} \translation{A}{B}\\
 &= - \rotation{B}{A} \translation{A}{B}
}
\end{proof}

\subsection{Proof of~\Cref{theorem:new_radius}}
\label{section:proof:new_radius}
\begin{proof}
Consider any point $p\inframe{B_{k+1}} \in \R^3$. When expressed in frame $B_k$, we have
\eqnN{
&\norm{p\inframe{B_k} - \hat p\inframe{B_k}}\\
&= \norm{ \left( \Brotation{k+1}{k}p\inframe{B_{k+1}} + \Btranslation{k+1}{k} \right) - \left(
\estBrotation{k+1}{k} p\inframe{B_{k+1}} + \estBtranslation{k+1}{k} \right)
}\\
~
&= \norm{ \left( \Brotation{k+1}{k}- \estBrotation{k+1}{k} \right) p\inframe{B_{k+1}} + \left(\Btranslation{k+1}{k} - \estBtranslation{k+1}{k}\right)
}\\
~
&= \norm{ \left( \Brotation{k+1}{k}- \estBrotation{k+1}{k} \right) p\inframe{B_{k+1}} + \left(-\Brotation{k+1}{k}\Btranslation{k}{k+1} +\estBrotation{k+1}{k}\estBtranslation{k}{k+1}\right)
}
}
where in the last step we used the relationship $\translation{B}{A} = - \rotation{B}{A}\translation{A}{B}$ (see~\Cref{lemma:inverse_rototranslation}). Adding and subtracting $\Brotation{k+1}{k} \estBtranslation{k}{k+1}$ inside the norm yields
\eqnN{
&\norm{p\inframe{B_k} - \hat p\inframe{B_k}}\\
&= \norm{ \left( \Brotation{k+1}{k}- \estBrotation{k+1}{k} \right) p\inframe{B_{k+1}}  + \left( -\Brotation{k+1}{k}\Btranslation{k}{k+1} +\estBrotation{k+1}{k}\estBtranslation{k}{k+1} \right) + \left(\Brotation{k+1}{k} \estBtranslation{k}{k+1} -  \Brotation{k+1}{k} \estBtranslation{k}{k+1} \right) } \\
&= \norm{ \left( \Brotation{k+1}{k}- \estBrotation{k+1}{k} \right) p\inframe{B_{k+1}} + \Brotation{k+1}{k} \left( - \Btranslation{k}{k+1} +  \estBtranslation{k}{k+1} \right) + \left( \estBrotation{k+1}{k}  -  \Brotation{k+1}{k} \right) \estBtranslation{k}{k+1} }\\
&= \norm{ \left( \Brotation{k+1}{k}- \estBrotation{k+1}{k} \right) \left( p\inframe{B_{k+1}} -  \estBtranslation{k}{k+1}   \right) + \Brotation{k+1}{k} \left( - \Btranslation{k}{k+1} +  \estBtranslation{k}{k+1} \right) }
}
Now using the triangle inequality and simplifying,
\eqnN{
&\norm{p\inframe{B_k} - \hat p\inframe{B_k}}\\
&\leq \norm{ \Brotation{k+1}{k}- \estBrotation{k+1}{k}} \norm{p\inframe{B_{k+1}} -  \estBtranslation{k}{k+1} } + \norm{\Brotation{k+1}{k} \left( - \Btranslation{k}{k+1} +  \estBtranslation{k}{k+1} \right) }\\
&= \norm{ \Brotation{k+1}{k} - \estBrotation{k+1}{k}} \norm{p\inframe{B_{k+1}} -  \estBtranslation{k}{k+1} } + \norm{ - \Btranslation{k}{k+1} +  \estBtranslation{k}{k+1}  }\\
&= \norm{ \left(\Brotation{k+1}{k} - \estBrotation{k+1}{k} \right)^T} \norm{p\inframe{B_{k+1}} -  \estBtranslation{k}{k+1} } + \norm{ - \Btranslation{k}{k+1} +  \estBtranslation{k}{k+1}  }\\
&= \norm{ \Brotation{k}{k+1} - \estBrotation{k}{k+1}} \norm{p\inframe{B_{k+1}} -  \estBtranslation{k}{k+1} } + \norm{ \Btranslation{k}{k+1} -  \estBtranslation{k}{k+1}  }\\
&\leq \epsilon_{R, k} \norm{p\inframe{B_{k+1}} -  \estBtranslation{k}{k+1} } + \epsilon_{t, k}
}

Define $\Pcal \subset \R^3$ as the set of points $p\inframe{B_k}$ that can correspond to $p\inframe{B_{k+1}}$:
\eqnN{
\Pcal = \{ p \in \R^3 : \norm{ p - \hat p\inframe{B_{k}}} \leq \Delta \},
}
where $\Delta = \epsilon_{R, k} \norm{p\inframe{B_{k+1}} -  \estBtranslation{k}{k+1} } + \epsilon_{t, k} $.
Then, 
\eqnN{
d(p\inframe{B_{k+1}}) &\geq \min_{\tilde p \in \Pcal} d(\tilde p)\\
&\geq d(\hat p\inframe{B_k}) - \Delta\\
&\geq d_M^k(\hat p\inframe{M}) - \epsilon_{R, k} \norm{p\inframe{B_{k+1}} -  \estBtranslation{k}{k+1} } - \epsilon_{t, k}
}
where $\hat p\inframe{B_k} = \estBrotation{k+1}{k} p\inframe{B_{k+1}} + \estBtranslation{k+1}{k}$, and $\hat p\inframe{M} = \estrotation{B_k}{M} \hat p\inframe{B_k} + \esttranslation{B_k}{M}$.  Therefore, we have
\eqnN{
d(p\inframe{B_{k+1}}) &\geq d_M^k(\hat p\inframe{M}) - \epsilon_R \norm{p\inframe{B_{k+1}} -  \estBtranslation{k}{k+1} } - \epsilon_t\\
&= d_M^k(\hat p\inframe{M}) - \epsilon_R \norm{\left( \estrotation{M}{B_{k+1}} \hat p\inframe{M} + \esttranslation{M}{B_{k+1}}\right) -  \estBtranslation{k}{k+1} } - \epsilon_t\\
&= d_M^{k+1}(\hat p\inframe{M})
}
by defining $d_M^{k+1}: \R^3 \to \R$ as 
\eqnN{
d_M^{k+1}(\hat p \inframe{M}) = d_M^k(\hat p\inframe{M}) - \epsilon_R \norm{ \estrotation{M}{B_{k+1}} \hat p\inframe{M} + \esttranslation{M}{B_{k+1}} -  \estBtranslation{k}{k+1} } - \epsilon_t.
}
This completes the proof. 
\end{proof}

\end{document}